\definecolor{codegreen}{rgb}{0,0.6,0}
\definecolor{codegray}{rgb}{0.3607843137,
0.4823529412,
0.5725490196}
\definecolor{codeblue}{rgb}{0,0.28,0.67}
\definecolor{backcolour}{rgb}{0.9882352941,
0.9725490196,
0.9294117647}
\lstdefinestyle{mystyle}{
    escapechar=\%,
    backgroundcolor=\color{backcolour},   
    commentstyle=\color{codegray},
    keywordstyle=\color{codeblue},
    basicstyle=\ttfamily\small,
    breakatwhitespace=false,         
    breaklines=true,                 
    captionpos=b,                    
    keepspaces=true,                 
    numbers=left,                    
    numbersep=5pt,                  
    showspaces=false,                
    showstringspaces=false,
    showtabs=false,                  
    tabsize=2
}
\newtheorem{theorem}{Theorem}
\newtheorem{fact}{Fact}
\newcommand{\R}{\mathbb{R}}
\newcommand{\E}{\mathbb{E}}
\title{\Large
Learning to (Learn at Test Time):\\
RNNs with Expressive Hidden States
\vspace{1.5ex}
}
\author{
\normalsize
Yu Sun\thanks{
\,Core contributors.
~$^\dag$\,Joint advising. ~ See author contributions at the end of the paper.\\
Correspondence to: 
\texttt{ys646@stanford.edu},
\texttt{xil202@ucsd.edu},
\texttt{kdalal@berkeley.edu}.\\
Code available in \href{https://github.com/test-time-training/ttt-lm-jax}{JAX} and \href{https://github.com/test-time-training/ttt-lm-pytorch}{PyTorch}.\\
The first version of this paper was submitted to arXiv on July 5, 2024. 
The current version contains updates on related work and limitations. 
All experiments were completed in the first version.
}$~\,^1$,
Xinhao Li$^*$$^2$, 
Karan Dalal$^*$$^3$, \\
\normalsize
Jiarui Xu$^2$, 
Arjun Vikram$^1$, 
Genghan Zhang$^1$, 
Yann Dubois$^1$, \\
\normalsize
Xinlei Chen$^\dag$$^4$,
Xiaolong Wang$^\dag$$^2$, 
Sanmi Koyejo$^\dag$$^1$, 
Tatsunori Hashimoto$^\dag$$^1$, 
Carlos Guestrin$^\dag$$^1$\\
\normalsize
$^1$\,Stanford University\quad 
~$^2$\,UC San Diego\quad
~$^3$\,UC Berkeley\quad
~$^4$\,Meta AI
\vspace{.5ex}
}
\date{}
\begin{document}

\maketitle

\begin{abstract}
\noindent
Self-attention performs well in long context but has quadratic complexity. 
Existing RNN layers have linear complexity, but their performance in long context is limited by the expressive power of their hidden states.
We present a practical framework for instantiating sequence modeling layers with linear complexity and expressive hidden states.
The key idea is to make the hidden state a machine learning model itself, and the update rule a step of self-supervised learning.
Since the hidden state is updated by training even on test sequences, our layers are called \emph{Test-Time Training (TTT) layers}.
We consider two instantiations: TTT-Linear and TTT-MLP, whose hidden state is a linear model and a two-layer MLP respectively.
We evaluate our instantiations at the scale of 125M to 1.3B parameters, comparing with a strong Transformer and Mamba, a modern RNN.
Similar to Transformer, TTT-Linear and TTT-MLP can
keep reducing perplexity by conditioning on more tokens, while Mamba cannot after 16k context.
TTT-MLP still faces challenges in memory I/O, but shows larger potential in long context, pointing to a promising direction for future research.
\end{abstract}
\vspace{5ex}

\begin{figure}
    \centering
    \hspace*{0.15in}
    \includegraphics[width=0.95\textwidth]{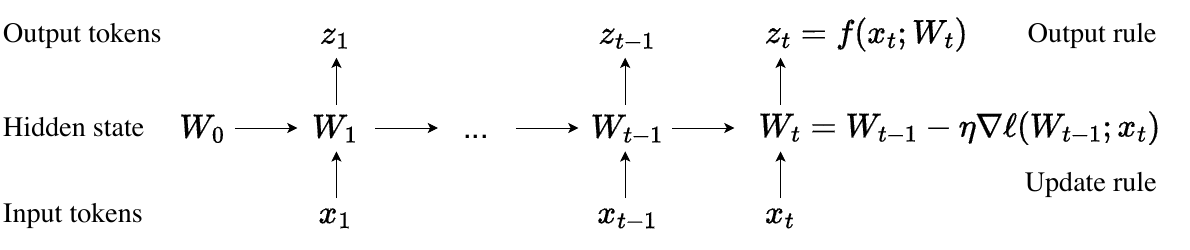}
    \caption{All sequence modeling layers can be expressed as a hidden state that transitions according to an update rule.
    Our key idea is to make the hidden state itself a model $f$ with weights $W$, and the update rule a gradient step on the self-supervised loss $\ell$.
    Therefore, updating the hidden state on a test sequence is equivalent to training the model $f$ at test time. 
    This process, known as Test-Time Training (TTT), is programmed into our TTT layers.
    }
    \label{fig:ttt-layer}
\end{figure}

\clearpage

\begin{figure}[t]
    \centering
    \includegraphics[width=0.49\textwidth]{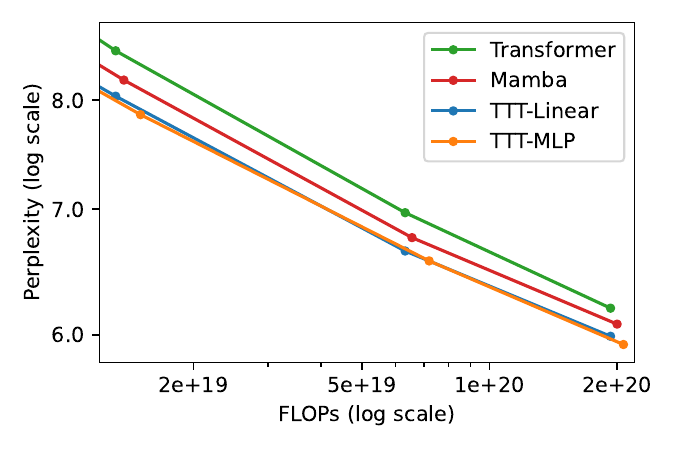}
    \includegraphics[width=0.49\textwidth]{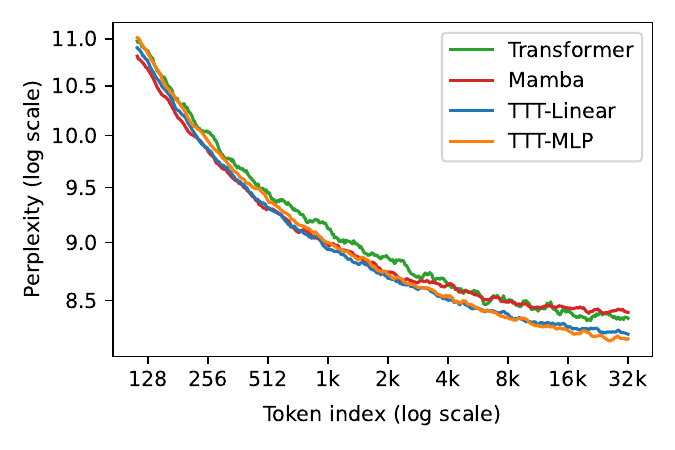}
    \caption{
    Comparing to Mamba, TTT-Linear and TTT-MLP have similar perplexity in 8k context (left) and better use of long context (right).
    Evaluations follow Kaplan et al.~\cite{kaplan2020scaling}.
    \textbf{Left:} 
    Scaling trends on the Pile with 8k context, zoomed in between 350M and 1.3B parameters.
    \textbf{Right:} 
    Similar to Transformer, TTT-Linear and TTT-MLP can keep reducing perplexity by conditioning on more tokens, while Mamba cannot after 16k context.
    All methods have matched training FLOPs as Mamba 1.4B.
    }
    \label{fig:teaser}
\end{figure}

\section{Introduction}
\label{intro}
In 2020, the OpenAI scaling law paper (Kaplan et. al~\cite{kaplan2020scaling}) showed that LSTMs (a type of RNN) could not scale similarly to Transformers or effectively use long context.
Now, with modern RNNs and best practices, we re-evaluate these findings in Figure~\ref{fig:teaser}.

On the left, we observe that Mamba~\cite{gu2023mamba}
-- one of the most popular RNNs today -- scales similarly to a strong Transformer, showing great progress since the LSTMs in 2020.
However, on the right, we observe the same issue with Mamba as Kaplan et al. did with LSTMs.
Tokens later in a sequence should be easier to predict on average, since they condition on more information.
This is indeed the case for Transformer, whose average perplexity at each token index decreases throughout its 32k context.
In contrast, the same metric plateaus for Mamba after 16k.

This result represents an awkward reality for existing RNNs.
On one hand, the main advantage of RNNs (vs. Transformers) is their linear (vs. quadratic) complexity. This asymptotic advantage is only realized in practice for long context, which according to Figure~\ref{fig:latency-full} is after 8k.
On the other hand, once context is long enough, existing RNNs such as Mamba struggle to actually take advantage of the extra information being conditioned on.

The difficulty with long context is inherent to the very nature of RNN layers:
Unlike self-attention, RNN layers have to compress context into a hidden state of fixed size.
As a compression heuristic, the update rule needs to discover the underlying structures and relationships among thousands or potentially millions of tokens.
This need is inherently challenging.
In this paper, we begin with the observation that self-supervised learning can compress a massive training set into the weights of a model such as an LLM, which often exhibits deep understanding about the semantic connections among its training data -- exactly what we need from a compression heuristic.

\paragraph{TTT layers.} Motivated by this observation, we make the hidden state a machine learning model itself, and the update rule a step of self-supervised learning.
Since the hidden state is updated by training even on test sequences, these RNN layers are called \emph{Test-Time Training (TTT) layers}.
We introduce two simple instantiations: TTT-Linear and TTT-MLP, where the hidden state is a linear model and a two-layer MLP, respectively. 
TTT layers can be integrated into any network architecture and optimized end-to-end, similar to RNNs layers and self-attention.

\paragraph{Wall-clock time.} We apply two techniques to make TTT layers more efficient on modern GPUs and TPUs. First, similar to the standard practice of taking gradient steps on mini-batches of sequences during regular training for better parallelism, we use mini-batches of tokens during TTT.
Second, we develop a dual form for operations inside each TTT mini-batch.
The dual form is equivalent in output to the naive implementation, but trains more than $5\times$ faster on our TPUs.

\paragraph{Contributions and limitations.}
The idea of using linear models as hidden states has already been well studied in DeltaNet~\cite{schlag2021linear, yang2024parallelizing}.
Since our first version was released, RNN layers with matrix (linear) hidden states have also been further advanced in Mamba 2~\cite{dao2024transformers} and Gated DeltaNet~\cite{yang2023gated}.
Compared to this line of work, our contribution is a practical framework that can instantiate arbitrary neural networks as hidden states.
However, such instantiations can still require substantial wall-clock time, even after applying our improvements in efficiency.
It remains to be seen whether our framework can produce instantiations that either overcome this limitation or offer benefits outweighing it.

\begin{figure}
\vspace{-1ex}
    \centering
    \begin{minipage}{\textwidth}
    \centering
    \includegraphics[width=\textwidth]{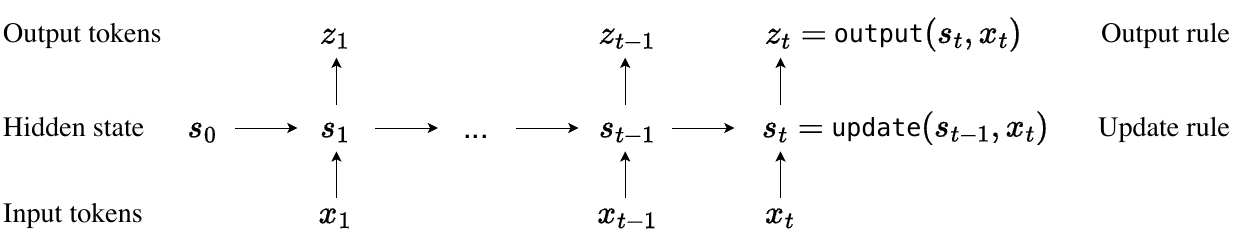}
    \end{minipage}
    \renewcommand{\arraystretch}{1.8}
    \begin{tabular}{|l|l|l|l|l|}
    \hline
    & \textbf{Initial state}
    & \textbf{Update rule}
    & \textbf{Output rule}
    & \textbf{Cost}\\
    \hline
    \textbf{Naive RNN} & $s_0 = \texttt{vector()}$ 
    & $s_t = \sigma\left(\theta_{ss}s_{t-1} + \theta_{sx}x_t \right)$ 
    & $z_t = \theta_{zs}s_t + \theta_{zx}x_t$ 
    & $O(1)$ \\
    \textbf{Self-attention} & $s_0 = \texttt{list()}$ 
    & $s_t = s_{t-1}.\texttt{append}(k_t, v_t)$ 
    & $z_t = V_t\texttt{softmax}\left(K^T_tq_t\right)$
    & $O(t)$ \\
    \textbf{Naive TTT} & $W_0 = f.\texttt{params()}$
    & $W_t = W_{t-1} - \eta\nabla\ell(W_{t-1}; x_t)$
    & $z_t = f(x_t; W_t)$
    & $O(1)$ \\
    \hline
    \end{tabular}
    \vspace{0.5ex}
    \caption{\textbf{Top}: A generic sequence modeling layer expressed as a hidden state that transitions according to an update rule.
    All sequence modeling layers can be viewed as different instantiations of three components in this figure: the initial state, update rule and output rule.
    \textbf{Bottom}: Examples of sequence modeling layers and their instantiations of the three components. The naive TTT layer was shown in Figure~\ref{fig:ttt-layer}.
    Self-attention has a hidden state growing with context, therefore growing cost per token.
    Both the naive RNN and TTT layer compress the growing context into a hidden state of fixed size, therefore their cost per token stays constant.
    }
    \label{fig:all-layers}
\end{figure}

\section{Method}
\label{sec:method}

All sequence modeling layers can be viewed from the perspective of storing historic context into a hidden state, as shown in Figure~\ref{fig:all-layers}.\footnote{
\label{footnote:1}
We define a sequence modeling layer as an autoregressive mapping from one sequence to another.
}
For example, RNN layers -- such as LSTM~\cite{hochreiter1997long}, RWKV~\cite{peng2024eagle} and Mamba~\cite{gu2023mamba} layers -- compress context into a state of fixed size across time.
This compression has two consequences. 
On one hand, mapping an input token $x_t$ to output token $z_t$ is efficient, because both the update rule and output rule take constant time per token.
On the other hand, the performance of RNN layers in long context is limited by the expressive power of its hidden state $s_t$.

Self-attention can also be viewed from the perspective above, except that its hidden state, commonly known as the Key-Value (KV) cache, is a list that grows linearly with $t$.
Its update rule simply appends the current KV tuple to this list, and the output rule scans over all tuples up to $t$ to form the attention matrix.
The hidden state explicitly stores all historic context without compression, making self-attention more expressive than RNN layers for long context.
However, scanning this linearly growing hidden state also takes linearly growing time per token.

To remain both efficient and expressive in long context, we need a better compression heuristic.
Specifically, we need to compress thousands or potentially millions of tokens into a hidden state that can effectively capture their underlying structures and relationships.
This might sound like a tall order, but all of us are actually already familiar with such a heuristic.

\subsection{TTT as updating a hidden state}
\label{subsec:hidden}

The process of parametric learning can be viewed as compressing a massive training set into the weights of a model.
Specifically, we know that models trained with self-supervision can capture the underlying structures and relationships behind their training data~\cite{le2013building} -- exactly what we need from a compression heuristic.

LLMs themselves are great examples. Trained with the self-supervised task of next-token prediction, their weights can be viewed as a compressed form of storage for existing knowledge on the internet.
By querying LLMs, we can extract knowledge from their weights.
More importantly, LLMs often exhibit a deep understanding of the semantic connections among existing knowledge to express new pieces of reasoning~\cite{achiam2023gpt}.

Our key idea is to use self-supervised learning to compress the historic context $x_1,\dots,x_t$ into a hidden state $s_t$, by making the context an unlabeled dataset and the state a model.
Concretely, the hidden state $s_t$ is now equivalent to $W_t$, the weights of a model $f$, which can be a linear model, a small neural network, or anything else.
The output rule is simply:
\begin{equation}
\label{eq:output_naive}
z_t = f(x_t; W_t).
\end{equation}
Intuitively, the output token is just the prediction on $x_t$, made by $f$ with the updated weights $W_t$.
The update rule is a step of gradient descent on some self-supervised loss $\ell$: 
\begin{equation}
\label{eq:update_naive}
W_t = W_{t-1} - \eta\,\nabla\ell(W_{t-1}; x_t),
\end{equation}
with learning rate $\eta$.\footnote{\label{footnote:2}
For now, consider $W_0 = 0$. We will discuss more sophisticated techniques for initializing $W$ in Subsection~\ref{subsec:tricks}.}
From the compression point of view, every heuristic needs to decide which input to remember or forget. Our $W$ remembers inputs that produce large gradients -- intuitively, inputs that make $W$ learn a lot.

One choice of $\ell$ is reconstructing $x_t$ itself. 
To make the learning problem nontrivial, we first process $x_t$ into a corrupted input $\tilde{x}_t$ (details in Subsection~\ref{subsec:task}), then optimize:
\begin{equation}
\label{eq:recon}
\ell(W; x_t) = \| f(\tilde{x}_t; W) - x_t \|^2.
\end{equation}
Similar to denoising autoencoders~\cite{denoisingautoencoder}, $f$ needs to discover the correlations between dimensions of $x_t$ in order to reconstruct it from partial information $\tilde{x}_t$.\footnote{\label{footnote:3}
In past experiments, we have also tried adding another model $g$ (decoder) after $f$ (encoder), such that the reconstruction is produced by $g\circ f$ instead of only $f$ itself.
While this heftier design did slightly improve results, it made overall training less stable and added significant computational cost.
Therefore we focus on the encoder-only design.
}
As shown in Figure~\ref{fig:inner}, gradient descent is able to reduce $\ell$, but cannot reduce it to zero.
We discuss more sophisticated formulations of the self-supervised task in Subsection~\ref{subsec:task}.

As with other RNN layers and self-attention, our algorithm that maps an input sequence $x_1,\dots,x_T$ to output sequence $z_1,\dots,z_T$ can be programmed into the forward pass of a sequence modeling layer, using the hidden state, update rule, and output rule above.
Even at test time, our new layer still trains a different sequence of weights $W_1, \dots, W_T$ for every input sequence. Therefore, we call it the \emph{Test-Time Training (TTT) layer}.

\begin{figure}[t]
\vspace{-2ex}
    \centering
    \includegraphics[width=\textwidth]{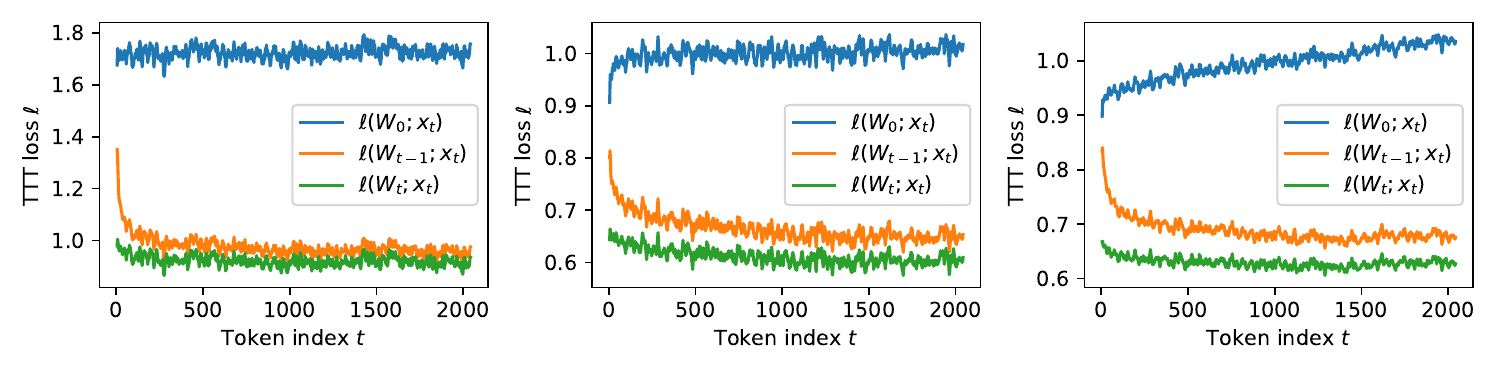}
    \caption{The self-supervised TTT loss $\ell$ averaged over all test sequences of the form $x_1,\dots,x_T$ where $T=2048$, for the first three TTT layers in a network with 125M parameters.
    One step of gradient descent is able to reduce TTT loss from $\ell(W_{t-1}; x_t)$ to $\ell(W_t; x_t)$.
    As $t$ moves further along the test sequence, $\ell(W_t; x_t)$ also improves further from $\ell(W_0; x_t)$.
    For visual clarity, loss values have been averaged over a sliding window of 10 timesteps.
    See Figure~\ref{fig:inner_full} (in Appendix) for complete results on all 12 layers.
    }
    \label{fig:inner}
\end{figure}

\subsection{Training a network with TTT layers}
\label{subsec:layer}

The forward pass of a TTT layer also has a corresponding backward pass. 
\mbox{Our forward pass only} consists of standard differentiable operators except the gradient operator $\nabla$. 
However, $\nabla$ just maps one function to another, in this case $\ell$ to $\nabla\ell$, and $\nabla\ell$ is also composed of differentiable operators.
Conceptually, calling backward on $\nabla\ell$ means taking gradients of gradients -- a well explored technique in meta-learning~\cite{maclaurin2015gradient}.

TTT layers have the same interface as RNN layers and self-attention, therefore can be replaced in any larger network architecture, which usually contains many of these sequence modeling layers.
Training a network with TTT layers also works the same way as training any other language model, such as a Transformer.
The same data, recipe, and objective such as next-token prediction can be used to optimize parameters of the rest of the network.

We refer to training the larger network as the \emph{outer loop}, and training $W$ within each TTT layer as the \emph{inner loop}.
An important difference between the two nested learning problems is that the inner-loop gradient $\nabla\ell$ is taken w.r.t. $W$, the parameters of $f$, while the outer-loop gradient is taken w.r.t the parameters of the rest of the network, which we will denote by $\theta_{\text{rest}}$.
Throughout this paper, outer-loop parameters are always denoted by $\theta$ with various subscripts.

So far, the TTT layer has no outer-loop parameters, in contrast to other RNN layers and self-attention.
In Subsection~\ref{subsec:task}, we add outer-loop parameters to the TTT layer to improve its self-supervised task. 
Then in Subsection~\ref{subsec:mini-batch} and~\ref{subsec:dual}, we discuss two ways to improve the wall-clock time of TTT layers.

\begin{figure}[t]
\vspace{-3ex}
\begin{lstlisting}[
multicols=2,
xleftmargin=0em, 
numbers=none,
language=Python, 
]
class TTT_Layer(nn.Module):
  def __init__(self):
    self.task = Task()

  def forward(self, in_seq):
    state = Learner(self.task)
    out_seq = []
    for tok in in_seq:
      state.train(tok)
      out_seq.append(state.predict(tok))
    return out_seq

class Task(nn.Module):
  def __init__(self):
    self.theta_K = nn.Param((d1, d2))
    self.theta_V = nn.Param((d1, d2))
    self.theta_Q = nn.Param((d1, d2))

  def loss(self, f, x):
    train_view = self.theta_K @ x
    label_view = self.theta_V @ x
    return MSE(f(train_view), label_view)
class Learner():
  def __init__(self, task):
    self.task = task
    # Linear here, but can be any model
    self.model = Linear()
    # online GD here for simplicity
    self.optim = OGD()

  def train(self, x):
    # grad function wrt first arg
    # of loss, which is self.model
    grad_fn = grad(self.task.loss)    
    # calculate inner-loop grad
    grad_in = grad_fn(self.model, x)

    # starting from current params,
    # step in direction of grad_in, 
    self.optim.step(self.model, grad_in)
    
  def predict(self, x):
    test_view = self.task.theta_Q @ x
    return self.model(test_view)
\end{lstlisting}
\vspace{1.5ex}
\caption{
Naive implementation of a TTT layer with a linear model and online GD in the style of PyTorch.
\texttt{TTT\_Layer} can be dropped into a larger network like other sequence modeling layers. 
Training the network will optimize the parameters of \texttt{Task} in \texttt{TTT\_Layer}, because both are subclasses of \texttt{nn.Module}.
Since \texttt{Learner} is not a subclass of \texttt{nn.Module}, \texttt{state.model} is updated manually in the inner loop for each call of \texttt{state.train}.
For simplicity, we sometimes overload \texttt{model} as \texttt{model.parameters}.
}
\label{fig:code}
\end{figure}

\subsection{Learning a self-supervised task for TTT}
\label{subsec:task}
Arguably the most important part of TTT is the self-supervised task, because it determines the kind of features that $W$ will learn from the test sequence.
So how should we design this task?
The final goal of TTT is for $z_t = f(x_t; W_t)$ to perform well on language modeling.
Instead of handcrafting a self-supervised task from human priors, we take a more end-to-end approach -- directly optimizing the self-supervised task for the final goal of next-token prediction.

Concretely, we learn the self-supervised task as part of the outer loop.
Starting from the naive reconstruction task in Equation~\ref{eq:recon}, we add some outer-loop parameters to make this task learnable.
In Subsection~\ref{subsec:hidden}, we did not specify the corruption that produces $\tilde{x}_t$ from $x_t$. One design is to make it a low-rank projection $\tilde{x}_t = \theta_Kx_t$, where $\theta_K$ is a learnable matrix.\footnote{\label{footnote:4}
The subscript $K$ hints at a connection to self-attention, as we will establish in Subsection~\ref{subsec:general}.
}
Following the terminology of multi-view reconstruction, $\theta_Kx_t$ is called a \emph{training view}~\cite{chen2020improved}.

Moreover, perhaps not all the information in $x_t$ is worth remembering, so the reconstruction label can be another low-rank projection $\theta_Vx_t$ instead of $x_t$.
Here $\theta_Vx_t$ is called the \emph{label view}, where $\theta_V$ is also learnable.
In summary, our new self-supervised loss is:
\begin{equation}
\label{eq:multi}
\ell(W; x_t) = \big\| f\left(\theta_K x_t; W\right) - \theta_V x_t \big\|^2.
\end{equation}

Since both $W$ and various $\theta$s appear together in Equation~\ref{eq:multi}, we emphasize again their difference in nature. 
In the inner loop, only $W$ is optimized, therefore written as an argument of $\ell$; the $\theta$s are "hyper-parameters" of this loss function.
In the outer loop, $\theta_K,\theta_V,\theta_Q$ are optimized alongside $\theta_\text{rest}$, and $W$ is merely a hidden state, not a parameter.
Figure~\ref{fig:code} illustrates this difference with code, where $\theta_K$ and $\theta_V$ are implemented as parameters of the TTT layer, 
analogous to the Key and Value parameters of self-attention.

Lastly, the training view $\theta_Kx_t$ has fewer dimensions than $x_t$, so we can no longer use the output rule in Equation~\ref{eq:output_naive}.
The simplest solution is to create a \emph{test view} $\theta_Qx_t$, and change our output rule to:
\begin{equation}
\label{eq:output}
z_t = f\left(\theta_Qx_t; W_t\right).
\end{equation}
This solution has an additional benefit. 
The training and label views specify the information in $x_t$ that is compressed into $W_t$ and propagated forward through time.
The test view specifies potentially different information that is mapped to the current output token $z_t$ and propagated forward through network layers, therefore adds more flexibility to the self-supervised task.

Altogether, the set of all possible choices for $\theta_K,\theta_Q,\theta_V$ induces a family of multi-view reconstruction tasks, and the outer loop can be interpreted as selecting a task from this family.
Here we have designed all views as linear projections for simplicity. 
Future work might experiment with more flexible transformations, or bigger and different families of self-supervised tasks.

\subsection{Parallelization with mini-batch TTT}
\label{subsec:mini-batch}

The naive TTT layer developed so far is already efficient in the number of floating point operations (FLOPs).
However, its update rule $W_t = W_{t-1} - \eta\nabla l(W_{t-1}; x_t)$ cannot be parallelized, 
because $W_t$ depends on $W_{t-1}$ in two places: before the minus sign and inside $\nabla l$.
Since $\nabla l$ contains the bulk of the computation, we focus on making this second part parallel.

We approach this systems challenge through concepts in the TTT framework. There are many variants of gradient descent (GD). 
The general update rule of GD can be expressed as:
\begin{equation}
\label{eq:gd}
W_t = W_{t-1} - \eta\,G_t = W_0 - \eta\sum_{s=1}^t G_s,
\end{equation}
where $G_t$ is the descent direction.
Note that once we have calculated $G_t$ for $t=1,\dots,T$, we can then obtain all the $W_t$s through a \texttt{cumsum} by the second half of Equation~\ref{eq:gd}.
Our naive update rule, known as \emph{online gradient descent}, 
uses \mbox{$G_t = \nabla l(W_{t-1}; x_t)$}.

To parallelize $G_t$ for $t=1,\dots,T$, we can take all of them w.r.t. $W_0$.
This variant with $G_t = \nabla \ell\left( W_0; x_t \right)$ is known as \emph{batch gradient descent}, since 
$\sum_{s=1}^t \nabla \ell\left( W_0; x_s \right)$ is the same as the gradient w.r.t. $W_0$ over $x_1,\dots,x_t$ as a batch.
However, in batch GD, $W_t$ is effectively only one gradient step away from $W_0$, in contrast to online GD, where $W_t$ is $t$ steps away from $W_0$.
Therefore, batch GD has a smaller effective search space, which ends up hurting performance for language modeling.

\begin{figure}[t]
    \centering
    \includegraphics[width=0.65\textwidth]{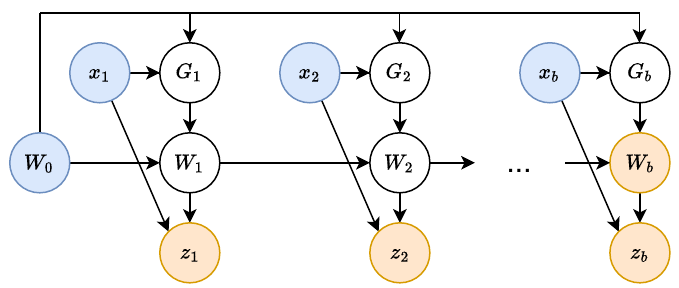}
    \caption{High-level computation graph of the first TTT mini-batch, where nodes are variables and edges are computations.
    The blue nodes are input variables, and yellow are output.
    \textbf{Subsection~\ref{subsec:mini-batch}}: Since $G_1, \dots, G_b$ are not connected, they have no sequential dependency on each other, therefore can be computed in parallel.
    \textbf{Subsection~\ref{subsec:dual}}:
    We do not actually materialize the white nodes -- the intermediate $G$s and $W$s -- to compute the output variables in the dual form.
    }
    \label{fig:inner-batch}
\end{figure}

Our proposed solution -- \emph{mini-batch gradient descent} -- is shown in Figure~\ref{fig:inner-batch}.
Denote the TTT batch size by $b$. We use $G_t = \nabla \ell\left( W_{t'}; x_t \right)$, where $t' = t \,- \,$\texttt{mod}$(t, b)$ is the last timestep of the previous mini-batch (or 0 for the first mini-batch), so we can parallelize $b$ gradient computations at a time. 
Empirically, $b$ controls a trade-off between speed and quality, as shown in Figure~\ref{fig:batch-size}.
We chose $b=16$ for all experiments in this paper.

In summary, there are two potential channels to propagate information from $W_s$ to $W_t$ where $s<t$: \texttt{cumsum} and the gradient operator. The \texttt{cumsum} is always active, but the gradient channel is only active when $W_s$ is from a previous mini-batch.
Different variants of gradient descent only affect the gradient channel, \emph{i.e.}, the descent direction $G_t$, specifically w.r.t. which $W$ the gradient is taken. 
However, the descent step $W_t = W_{t-1} - \eta\,G_t$ always starts from $W_{t-1}$, due to the autoregressive nature of the update rule, which is orthogonal to the choice of $G_t$.


\begin{figure}[t]
    \centering
    \includegraphics[width=\textwidth]{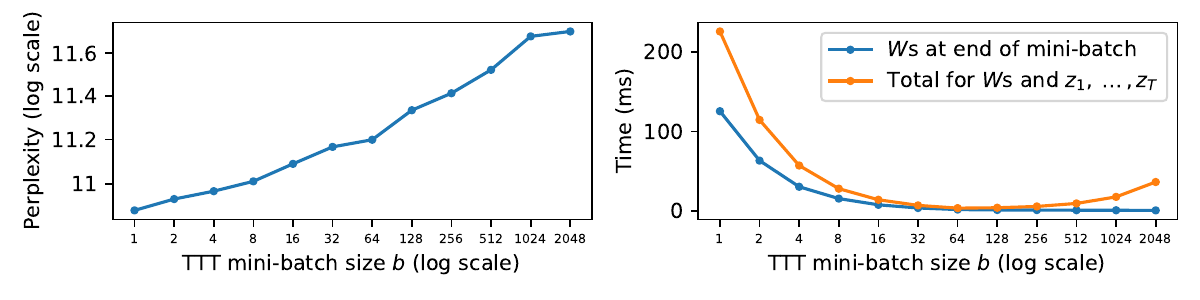}
    \caption[Caption for LOF]{
    Ablations on TTT mini-batch size $b$, where $b=1$ is online GD and $b=T$ is batch GD. 
    We choose $b=16$ for all experiments in this paper.
    \textbf{Left}: Smaller $b$ improves perplexity since more GD steps are taken.\protect\footnotemark\,
    The perplexity of 11.09 at $b=16$ corresponds to the final result of TTT-Linear in Figure~\ref{fig:pile}.
    \textbf{Right}: Forward time in dual form, with context length $T=2048$.
    Total time (orange) can be decomposed into time for computing the $W$s at the end of every mini-batch (blue) and time for $z_1,\dots,z_T$ (orange $-$ blue).\protect\footnotemark\,
    Time complexity for the $W$s is $O(T\times d^2)$, constant in $b$, but the blue line decreases as larger $b$ allows more parallelization until hardware utilization saturates.
    Time complexity for $z_1,\dots,z_T$ is $O(T\times b\times d)$, so the orange line first decreases with more parallelization, then increases as the extra computation for $z_1,\dots,z_T$ becomes dominant.
}
    \label{fig:batch-size}
\end{figure}
\setcounter{footnote}{\value{footnote}-1}
\footnotetext{\label{footnote:5}
    In theory, $b$ can potentially be too small such that the variance between mini-batches is too high, hurting optimization. 
    However, we have not observed such an effect in practice.
    }
\setcounter{footnote}{\value{footnote}+1}
\footnotetext{\label{footnote:6}
    For Figure~\ref{fig:batch-size}, we use a single TTT layer in TTT-Linear 1.3B, implemented in pure PyTorch. 
    Our fused kernel significantly improves time efficiency, but makes it difficult to cleanly decompose the time for computing $W_b$ vs. $z_1,\dots,z_b$.
    }

\subsection{Dual form}
\label{subsec:dual}
The parallelization introduced above is necessary but not sufficient for efficiency in wall-clock time.
Modern accelerators specialize in matrix-matrix multiplications, known as \texttt{matmul}s.
For example, the NVIDIA A100 GPU contains highly optimized units called TensorCores that can only perform a single operation -- multiplying two matrices each of size $16\times 16$.
Without enough of these \texttt{matmul}s, the TensorCores are idle, and most of the potential for the A100 is unrealized.

Unfortunately, the TTT layer developed so far even with mini-batch still has very few \texttt{matmul}s.
Consider the simplest case of $\ell$, where $\theta_K = \theta_V = \theta_Q = I$, for only the first TTT mini-batch of size $b$.
In addition, consider $f$ as a linear model.
Copying Equation~\ref{eq:recon},
our loss at time $t$ is:
$$
\ell\left( W_0; x_t \right) = \|f(x_t; W_0) - x_t\|^2 
= \|W_0 x_t - x_t\|^2.
$$
As discussed in Subsection~\ref{subsec:mini-batch}, we can parallelize the computation of:
$$
G_t = \nabla \ell\left( W_0; x_t \right) = 2(W_0 x_t - x_t)x_t^T,
$$
for $t=1,\dots,b$.
However, we cannot compute all $b$ of the $G_t$s through a single \texttt{matmul}.
Instead, we need $b$ outer products to compute them one by one.
To make matters worse, for each $x_t\in \R^d$, $G_t$ is $d\times d$, which incurs much heavier memory footprint and I/O cost than $x_t$ for large $d$.

To solve these two problems, we make a simple observation: We do not actually need to materialize $G_1, \dots, G_b$ as long as we can compute $W_b$ at the end of the mini-batch, and the output tokens $z_1, \dots, z_b$ (see Figure~\ref{fig:inner-batch}).
Now we demonstrate these computations with the simplified TTT-Linear case above.
Denote $X = [x_1, \dots, x_b]$, then:
$$
W_b = W_0 - \eta\sum_{t=1}^b G_t
= W_0 - 2\eta\sum_{t=1}^b (W_0 x_t - x_t)x_t^T
= W_0 - 2\eta(W_0X - X)X^T.
$$
So $W_b$ can be conveniently computed with a \texttt{matmul}.
To compute $Z = [z_1, \dots, z_b]$, we know that:
\begin{equation}
\label{eq:zmat}
z_t = f(x_t; W_t) = W_tx_t = \left(W_0 - \eta \sum_{s=1}^t G_t\right)x_t
= W_0x_t - 2\eta \sum_{s=1}^t (W_0x_s - x_s)x_s^Tx_t.
\end{equation}
Denote $\delta_t = \sum_{s=1}^t (W_0x_s - x_s)x_s^Tx_t$ and the matrix $\Delta = [\delta_1, \dots, \delta_b]$.
We can derive that:
\begin{equation}
\Delta = \left( W_0X - X \right)\texttt{mask}\left( X^TX \right),
\end{equation}
where \texttt{mask} is the upper triangular mask with zeros (similar to the attention mask, but with zeros instead of infinities), and the term $W_0X - X$ can be reused from the computation of $W_b$.
Now $\Delta$ is also conveniently computed with \texttt{matmul}s. 
Plugging $\Delta$ back into Equation~\ref{eq:zmat}, we obtain $Z = W_0X - 2\eta\Delta$.

We call this procedure the \emph{dual form}, in contrast to the \emph{primal form} before this subsection, where the $G$s and $W$s are explicitly materialized.
As discussed, the two forms are equivalent in output. 
The terminology of primal and dual follows prior work that has explored similar mathematical formulations outside of TTT~\cite{irie2022dual, bishop2006pattern, rosenblatt1958perceptron}.
In Appendix~\ref{app:dual}, we show that the dual form still works when $f$ is a neural network with nonlinear layers, except with more complicated notation.

Time complexity of the primal form within a TTT mini-batch is $O(b\times d^2)$.
Time complexity of the dual form is $O(b\times d^2)$ for computing $W_b$ alone, then an additional $O(b^2\times d)$ for computing $z_1,\dots,z_b$.
Compared to the primal, the dual form sacrifices theoretical complexity for hardware utilization.
In practice, $d$ is typically a few hundred and $b$ is chosen to be only 16. As a consequence, wall-clock time for computing $z_1,\dots,z_b$ is relatively small, as observed in the right panel of Figure~\ref{fig:batch-size}.
In our JAX implementation, training with the dual form is more than $5\times$ faster than with primal.

\subsection{Theoretical equivalences}
\label{subsec:general}

\begin{figure}
    \centering
    \includegraphics[width=0.75\textwidth]{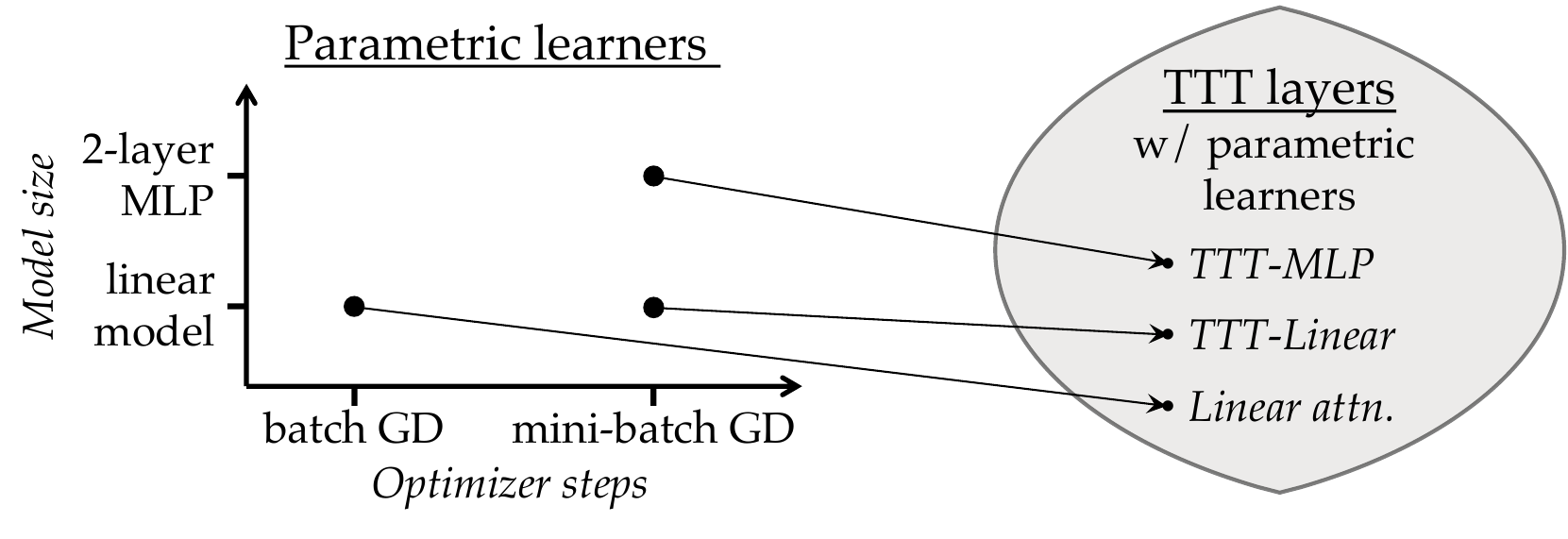}
    \caption{Parametric learners need to define two attributes: model and optimizer (left), and each learner uniquely induces a TTT layer (right). 
    Two of the induced TTT layers: TTT-Linear and TTT-MLP, are proposed in this paper.
    The TTT layer with a linear model and batch GD is equivalent to linear attention~\cite{katharopoulos2020transformers}.
    }
    \label{fig:local}
\end{figure}

In Subsection~\ref{subsec:hidden}, we mentioned that $f$ can be a linear model or a neural network.
In Subsection~\ref{subsec:mini-batch}, we also discussed three variants of the update rule: online GD, batch GD, and mini-batch GD.
Each of these $2\times 3$ combinations induces a different instantiation of the TTT layer, as illustrated in Figure~\ref{fig:local}.
We now show that among these induced instantiations, the TTT layer with a linear model and batch GD is equivalent to linear attention~\cite{katharopoulos2020transformers}, a widely known RNN layer.\footnote{\setstretch{1.25}
In a nutshell, linear attention~\cite{katharopoulos2020transformers} is simply self-attention without the softmax.
Recall the definition of self-attention:
$z_t = V_t\texttt{softmax}\left(K^T_tq_t\right)$.
Without \texttt{softmax}, this becomes 
$z_t = V_t\left(K^T_tq_t\right)
=\sum_{s=1}^t v_s k_s^T q_t$, which is the simplest formulation of linear attention.
Similar to other RNN layers, it can be written in a recurrent form, where $\sum_{s=1}^t v_s k_s^T$ is the hidden state.
Since $\sum_{s=1}^t v_s k_s^T$ can be computed in a \texttt{cumsum} for every $t=1,\dots,T$, linear attention also has linear complexity w.r.t. $T$.
}
\begin{theorem}
\label{theorem:la}
Consider the TTT layer with $f(x) = Wx$ as the inner-loop model, batch gradient descent with $\eta=1/2$ as the update rule, and $W_0=0$. Then, given the same input sequence $x_1,\dots,x_T$, the output rule defined in Equation~\ref{eq:output} produces the same output sequence $z_1,\dots,z_T$ as linear attention.
\end{theorem}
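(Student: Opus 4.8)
The plan is to unroll the definitions and observe that the three simplifying choices in the statement---$W_0 = 0$, batch gradient descent, and $\eta = 1/2$---collapse the inner loop into a running sum of rank-one terms, which is exactly the hidden state of linear attention. First I would fix notation for the three views, writing $k_t = \theta_Q$-free shorthand $k_t = \theta_K x_t$, $v_t = \theta_V x_t$, and $q_t = \theta_Q x_t$, so that the self-supervised loss in Equation~\ref{eq:multi}, for the linear model $f(x) = Wx$, specializes to $\ell(W; x_t) = \|Wk_t - v_t\|^2$. Differentiating this matrix-valued quadratic in $W$ gives the inner-loop descent direction $\nabla\ell(W; x_t) = 2(Wk_t - v_t)k_t^\top$, an outer product of size $d\times d$.

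Next I would instantiate the batch-GD update rule of Subsection~\ref{subsec:mini-batch}: by definition, every descent direction is evaluated at the common anchor $W_0$, i.e. $G_s = \nabla\ell(W_0; x_s)$. Substituting $W_0 = 0$ kills the first term and leaves $G_s = -2 v_s k_s^\top$. The second half of Equation~\ref{eq:gd} then gives $W_t = W_0 - \eta\sum_{s=1}^t G_s = 2\eta\sum_{s=1}^t v_s k_s^\top$, and with $\eta = 1/2$ this is exactly $W_t = \sum_{s=1}^t v_s k_s^\top$.

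Finally I would apply the output rule of Equation~\ref{eq:output}: $z_t = f(\theta_Q x_t; W_t) = W_t q_t = \bigl(\sum_{s=1}^t v_s k_s^\top\bigr) q_t = \sum_{s=1}^t v_s\,(k_s^\top q_t)$, which is precisely the causal/recurrent form of linear attention recalled in the footnote, with $\theta_K, \theta_V, \theta_Q$ playing the roles of the key, value, and query projections. Since this holds for every $t = 1, \dots, T$, the two output sequences coincide, which is the claim.

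There is no real analytic obstacle; the computation is routine, so the only care needed is bookkeeping and choosing the right variant. Two points deserve emphasis. First, one must get the gradient of the squared residual norm right---the $k_t^\top$ sits on the \emph{outside}, producing a $d\times d$ matrix, not a scalar or a vector. Second, and more conceptually, the argument genuinely uses \emph{batch} GD rather than online or mini-batch GD: with online GD the term $W_{t-1}k_t$ would reappear inside every $G_t$ and the hidden state would no longer be a plain running sum of $v_s k_s^\top$. It is the combination of evaluating all gradients at the single anchor $W_0$ together with $W_0 = 0$ that linearizes the recursion; I would also note in passing that $\eta = 1/2$ is just the normalization that cancels the factor of $2$ from differentiating the square, so the constant in the statement is not an artifact.
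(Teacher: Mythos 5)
Your proposal is correct and follows essentially the same route as the paper's proof: compute $\nabla\ell(W_0;x_t)=-2(\theta_V x_t)(\theta_K x_t)^\top$ using $W_0=0$, unroll batch GD via Equation~\ref{eq:gd} with $\eta=1/2$ to get $W_t=\sum_{s=1}^t(\theta_V x_s)(\theta_K x_s)^\top$, and plug into the output rule of Equation~\ref{eq:output} to recover linear attention. Your extra remarks (the outer-product form of the gradient, why batch rather than online GD is essential, and the role of $\eta=1/2$) are accurate elaborations, not deviations.
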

\begin{proof}
By definition of $\ell$ in Equation~\ref{eq:multi},
$\nabla \ell\left(W_0; x_t\right) = -2(\theta_V x_t) (\theta_K x_t)^T$.
By definition of batch GD in Equation~\ref{eq:gd}
\vspace{-2ex}:
$$
W_t = W_{t-1} - \eta\nabla \ell\left(W_0; x_t\right)  
= W_0 - \eta\sum_{s=1}^t\nabla \ell\left(W_0; x_s\right)
= \sum_{s=1}^t (\theta_V x_s) (\theta_K x_s)^T.
$$
Plugging $W_t$ into the output rule in Equation~\ref{eq:output}, we obtain the output token:
$$
z_t = f\left(\theta_Qx_t; W_t\right) = \sum_{s=1}^t (\theta_V x_s) (\theta_K x_s)^T (\theta_Qx_t),
\vspace{-1ex}
$$
which is the definition of linear attention.
\end{proof}

In Table~\ref{tab:ablations}, we first empirically verify the equivalence above with an improved implementation of linear attention.\footnote{\label{footnote:8}
The original formulation of linear attention in \cite{katharopoulos2020transformers} contains a normalizer and a feature expansion on $x_t$, which can still be included in an equivalent TTT layer.
However, prior work has found that these two additions can hurt performance~\cite{qin2022devil}, which we have verified in our own experiment (first vs. second row of Table~\ref{tab:ablations}).
Therefore, we only construct a TTT layer equivalent to the simplest formulation of linear attention without the two additions.
}
Then, to illustrate the contribution of each of our components (including some that will be introduced in the next subsection), we add them row by row to the TTT layer that is equivalent to linear attention, and ultimately obtain our proposed instantiation called \emph{TTT-Linear}.
The change from batch GD to mini-batch GD contributes the most improvement by a large margin.

\begin{figure}
\vspace{-5ex}
    \centering
    \begin{minipage}[c]{0.60\textwidth}
    \includegraphics[width=0.92\textwidth]{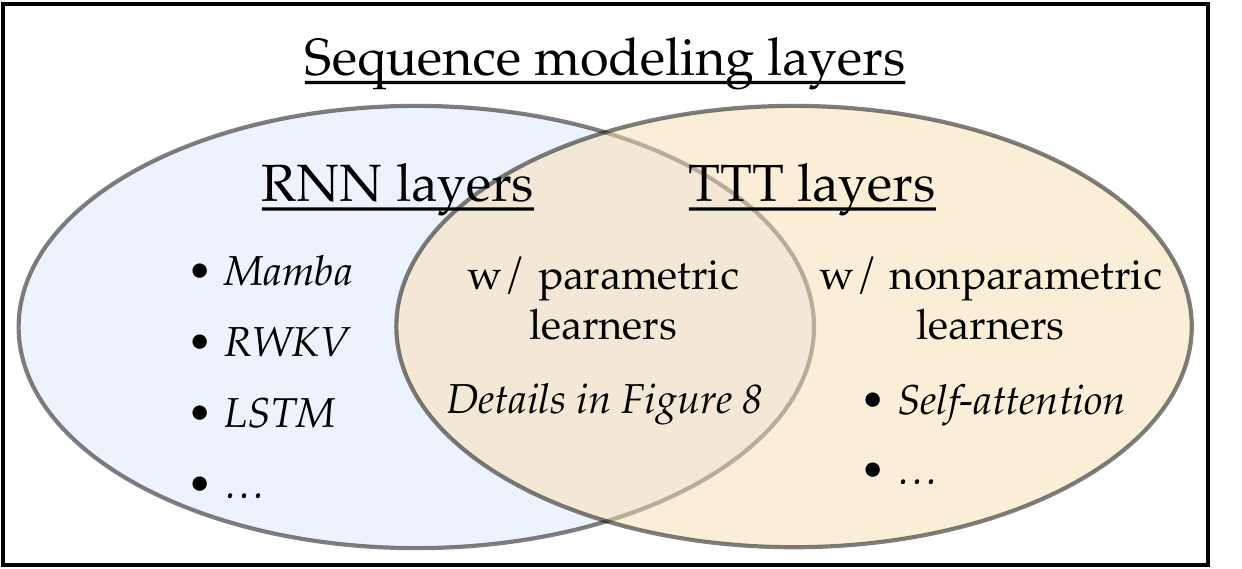}
    \end{minipage}\hfill
    \begin{minipage}[c]{0.38\textwidth}
    \vspace{2ex}
    \caption{
    RNN layers and TTT layers are both subsets of \mbox{sequence modeling layers.}
    RNN layers have a hidden state that is fixed in size across time.
    TTT layers with parametric learners are also RNN layers, since their hidden state is also fixed in size.
    TTT layers with nonparametric learners can represent self-attention, as discussed in Subsection~\ref{subsec:general}.
    }
    \label{fig:global}
    \end{minipage}
    \vspace{1ex}
\end{figure}

While the space of models $\times$ optimizers in Figure~\ref{fig:local} is already large, 
machine learning is much richer than optimizing the parameters $W_t$ of a model $f$.
There are also nonparametric learners, such as nearest neighbors, support vector machines (SVMs), and kernel ridge regression.
By definition, nonparametric learners do not have parameters $W_t$, and instead directly uses training data $x_1,\dots,x_t$. 
Hence we use the notation $f(x; x_1, \dots, x_t)$.
We now show that for a particular nonparametric learner, the induced TTT layer is equivalent to self-attention. 
\begin{theorem}
\label{theorem:sa}
Consider the TTT layer with the Nadaraya-Watson estimator~\cite{bierens1988nadaraya, cai2001weighted}, defined as:
\begin{equation}
\label{eq:nwe}
    f(x; x_1, \dots, x_t) = \frac{1}{\sum_{s=1}^t \kappa(x, x_s)} \sum_{s=1}^t \kappa(x, x_s) ~y_s,
\end{equation}
where $y_s = \theta_Vx_s$ is the label view discussed in Subsection~\ref{subsec:task}, and 
\begin{equation}
\label{eq:kernel}
\kappa\left(x, x'; \theta_K, \theta_Q\right) \propto e^{(\theta_Kx)^T \theta_Qx'}
\end{equation}
is a kernel with bandwidth hyper-parameters $\theta_K$ and $\theta_Q$.
Then given the same input sequence $x_1,\dots,x_T$, the output rule defined in Equation~\ref{eq:output} produces the same output sequence $z_1,\dots,z_T$ as self-attention.
\end{theorem}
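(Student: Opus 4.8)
The plan is to prove this the same way Theorem~\ref{theorem:la} is proved: unfold both sides into explicit sums and match them term by term. It is in fact easier here, because the Nadaraya--Watson learner is nonparametric, so there is no gradient-descent update, no \texttt{cumsum}, and no descent direction to keep track of. First I would spell out what ``the TTT layer with the Nadaraya--Watson estimator'' means as a sequence-modeling layer in the sense of Figure~\ref{fig:all-layers}: a nonparametric learner simply stores its training set, so its hidden state after step $t$ is the collection $x_1,\dots,x_t$ and its update rule appends $x_t$ to it --- structurally the same as the append-to-cache update of self-attention. Hence the update rules already coincide and everything reduces to checking that the output rules agree.

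Next I would expand the output rule, which for a nonparametric learner is Equation~\ref{eq:output} with the stored data $x_1,\dots,x_t$ in place of $W_t$. Substituting into Equation~\ref{eq:nwe}, then plugging in the kernel from Equation~\ref{eq:kernel} and the label view $y_s=\theta_V x_s$, the output token becomes a convex combination $z_t=\sum_{s=1}^{t}\alpha_{t,s}\,\theta_V x_s$ whose weights $\alpha_{t,s}$ are proportional to $\exp\big((\theta_K x_t)^T\,\theta_Q x_s\big)$ and normalized over $s=1,\dots,t$. On the other side, writing self-attention $z_t=V_t\,\texttt{softmax}(K_t^T q_t)$ in causal, softmax-explicit form with $k_s=\theta_K x_s$, $q_t=\theta_Q x_t$, $v_s=\theta_V x_s$ gives $z_t=\sum_{s=1}^{t}\frac{\exp(k_s^T q_t)}{\sum_{s'=1}^{t}\exp(k_{s'}^T q_t)}\,v_s$. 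Up to relabeling which projection is the ``key'' and which is the ``query'' --- and, depending on how one reads the test view, absorbing an extra $\theta_Q$ into a composite attention projection --- these are the same expression, so matching them closes the proof.

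The only points that need care in the write-up, none of which require real computation, are: (i) the unspecified constant hidden in the ``$\propto$'' of Equation~\ref{eq:kernel} is common to every stored point, so it cancels between the numerator and the denominator of the estimator and is immaterial; (ii) the estimator at step $t$ sums only over $x_1,\dots,x_t$, which is exactly the causal attention mask, so the truncation matches automatically; and (iii) one must track precisely which of $\theta_K,\theta_Q$ acts on the stored point versus the query point and reconcile this with the test view $\theta_Q x_t$ in Equation~\ref{eq:output}, using the fact that ``key'' and ``query'' are interchangeable labels in self-attention so any consistent bilinear form in $x_s$ and $x_t$ is realized by some choice of attention projections. I expect (iii) to be the main source of friction, but it is bookkeeping rather than mathematics; the theorem is essentially a definitional unpacking once the induced layer is written down.
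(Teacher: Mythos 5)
Your proposal is correct and is essentially the paper's own argument: the paper's proof is the single line that plugging $y_s=\theta_V x_s$ and the kernel of Equation~\ref{eq:kernel} into Equation~\ref{eq:nwe} yields the definition of (causal, softmax) self-attention, with the surrounding text noting, as you do, that the nonparametric hidden state is the list $x_1,\dots,x_t$ and the update rule appends to it. Your extra bookkeeping --- cancellation of the proportionality constant, the sum over $s\le t$ matching the causal mask, and reconciling the test view with the kernel's $\theta_K,\theta_Q$ roles --- just makes explicit what the paper leaves implicit.
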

\begin{proof}
Plugging $y_s$ and $\kappa$ above into Equation~\ref{eq:nwe} gives us the definition of self-attention.
\end{proof}
Appendix~\ref{app:theory} contains a detailed explanation of the Nadaraya-Watson estimator and kernel $\kappa$ above.
In contrast to Theorem~\ref{theorem:la}, Theorem~\ref{theorem:sa} does not produce a different implementation from attention.

For the TTT layer above, the hidden state is $x_1, \dots, x_t$ or a similar list of processed training data, the update rule adds $x_t$ to the list, and the output rule scans the list with $\kappa$.
In previous subsections, our hidden state has been defined as $W_t$, the update rule a gradient step, and the output rule a call to $f$. 
To unify these two constructions, we define a new abstraction called a learner, which uniquely induces a TTT layer.

Similar to its definition in standard machine learning packages~\cite{scikit-learn}, all learners need to implement two methods: \mbox{train and predict.}
Now we redefine the hidden state of the induced TTT layer as the internal storage of the learner, and the update and output rules as the train and predict methods.
Under this new definition of TTT layers, both parametric learners such as that in Theorem~\ref{theorem:la} and nonparametric learners such as that in Theorem~\ref{theorem:sa} can be included.
Figure~\ref{fig:global} summarizes this general definition of TTT layers in the broader scope of all sequence modeling layers.

This general definition has an additional benefit for parametric learners: 
There can be more objects other than $W$ in the internal storage of parametric learners, such as the optimizer state, which will also be included in the hidden state of the induced TTT layer.
This extension allows TTT layers to use more sophisticated optimizers such as Adam~\cite{kingma2014adam} in future work. 

\renewcommand{\arraystretch}{1.5}
\begin{table}[t]
\centering
\begin{minipage}{0.45\textwidth}
\scalebox{0.92}{
\begin{tabular}{|l|c|c|}
\hline
\textbf{Configuration} & \textbf{Ppl.} & \textbf{Diff.} \\
\hline
Linear attention~\cite{katharopoulos2020transformers} & 15.91 & - \\
Linear attn. improved   & 15.23 & $-$0.68 \\
\hline
TTT equivalence             & 15.23 & 0 \\
+ learnable $W_0$           & 15.27 & +0.04 \\
+ LN and residual in $f$    & 14.05 & $-$1.22 \\
+ mini-batch TTT            & 12.35 & $-$1.70 \\
+ learnable $\eta$      & 11.99 & $-$0.36 \\
+ Mamba backbone          & 11.09 & $-$0.90 \\
\hline
\end{tabular}
}
\end{minipage}
\hspace{2ex}
\begin{minipage}{0.5\textwidth}
\vspace{2ex}
\caption{Ablations on improving from linear attention. 
        All models here have 125M parameters, and are trained according to the recipe in Subsection~\ref{subsec:pile}.
        The last row, with perplexity 11.09, is the final result of TTT-Linear in Figure~\ref{fig:pile}.
        Starting from the equivalence discussed in Subsection~\ref{subsec:general}, learnable $W_0$ hurts slightly, but the rows below cannot train stably without it.
        The biggest improvement comes from mini-batch TTT (changing from $b=T=2048$ to $b=16$).
        The second comes from instantiating the inner model $f$ with LN and residual connection.
        Both of these designs would be difficult to come across without the conceptual framework of TTT.
}
\label{tab:ablations}
\end{minipage}
\end{table}

\subsection{Implementation details}
\label{subsec:tricks}

\paragraph{Instantiations of $f$.} 
We propose two variants of TTT layers -- TTT-Linear and TTT-MLP, differing only in their instantiations of $f$.
For TTT-Linear, $f_{\,\texttt{lin}}(x) = Wx$, where $W$ is square.
For TTT-MLP, $f_{\,\texttt{MLP}}$ has two layers similar to the MLPs in Transformers.
Specifically, the hidden dimension is $4 \times$ the input dimension, followed by a GELU activation~\cite{hendrycks2016gaussian}.
For better stability during TTT, $f$ always contains a Layer Normalization (LN) and residual connection. That is,
$f(x) = x + \texttt{LN}(f_{\,\texttt{res}}(x)),$
where $f_{\,\texttt{res}}$ can be $f_{\,\texttt{lin}}$ or $f_{\,\texttt{MLP}}$.

\paragraph{Learnable $W_0$.} The TTT initialization $W_0$ is shared between all sequences, even though subsequent weights $W_1,\dots,W_T$ are different for each input sequence. 
Instead of setting $W_0 = 0$, we can learn it as part of the outer loop. 
Since outer-loop parameters are always denoted by $\theta$s instead of $W$s, we assign an alias $\theta_\text{init} = W_0$.
In practice,
$\theta_\text{init}$ adds a negligible amount of parameters comparing to the reconstruction views $\theta_K,\theta_Q,\theta_V$, because both its input and output are low dimensional.
Empirically, we observe that learning $W_0$ significantly improves training stability.

\paragraph{Learnable $\eta$.} The learning rate is usually the most important hyper-parameter for gradient descent, so we experiment with learning the inner-loop learning rate $\eta$ in Equation~\ref{eq:gd} as part of the outer loop.
We make $\eta$ a function of the input token (therefore different across time) for additional flexibility.
Concretely, we design $\eta(x) = \eta_\text{base}\,\sigma(\theta_\text{lr} \cdot x)$, where the learnable vector $\theta_\text{lr}$ is an outer-loop parameter, $\sigma$ is the sigmoid function, and the scalar $\eta_\text{base}$ is the base learning rate, set to 1 for TTT-Linear and 0.1 for TTT-MLP.
Alternatively, $\eta(x)$ can also be interpreted as a gate for $\nabla\ell$.

\paragraph{Backbone architecture.}
The cleanest way to integrate any RNN layer into a larger architecture would be to directly replace self-attention in a Transformer, known in this context as a backbone. 
However, existing RNNs such as Mamba~\cite{gu2023mamba} and Griffin~\cite{de2024griffin} all use a different backbone from Transformers.
Most notably, their backbone contains temporal convolutions before the RNN layers, which might help collect local information across time.
After experimenting with the Mamba backbone, we find that it also improves perplexity for TTT layers, so we incorporate it into our proposed method.
See Figure~\ref{fig:backbone} (in Appendix) for details.

\section{Experiments}
\label{sec:results}

We evaluate TTT-Linear and TTT-MLP by comparing with two baselines 
-- Transformer and Mamba, a modern RNN.
Our main codebase is based on EasyLM~\cite{geng2023easylm}, an open-source project for training and serving LLMs in JAX.
All experiments can be reproduced using the publicly available code and datasets provided at the bottom of the first page.

\paragraph{Datasets.}
Following the Mamba paper~\cite{gu2023mamba}, we perform standard experiments with 2k and 8k context lengths on the Pile~\cite{gao2020pile}, a popular dataset of documents for training open-source LLMs~\cite{black2022gpt}. 
However, the Pile contains few sequences of length greater than 8k~\cite{devries2023contextlength}.
To evaluate capabilities in long context, we also experiment with context lengths ranging from 1k to 32k in $2\times$ increments, on a subset of the Pile called Books3, which has been widely used to train LLMs in long context~\cite{liu2024world, authorsguild2023}.

\paragraph{Backbone architecture.}
As discussed in Subsection~\ref{subsec:tricks}, Transformer and Mamba use different backbones, and TTT-Linear and TTT-MLP always use the Mamba backbone unless noted otherwise.
As an ablation study, Figure~\ref{fig:pile} and Figure~\ref{fig:books} contain TTT layers within the Transformer backbone.
When a figure contains both the Transformer backbone and Mamba backbone, we denote them by \emph{(T)} and \emph{(M)}, respectively.

\paragraph{Protocols.}
To ensure fairness to our baselines, we strictly follow the evaluation protocols in the Mamba paper when possible:
\begin{itemize}[leftmargin=*]
    \item For each evaluation setting (e.g., dataset, context length, and method), we experiment with four model sizes: 125M, 350M, 760M, and 1.3B parameters. For Mamba, the corresponding sizes are 130M, 370M, 790M, and 1.4B, as Mamba does not follow the Transformer configurations.
    \item All models are trained with the Chinchilla recipe\footnote{\label{footnote:10}
    The Chinchilla paper is another highly influential study of empirical scaling laws~\cite{hoffmann2022training}.
    From large-scale experiments with many hyper-parameters, they observe that the compute-optimal models follow a particular training recipe.
    We only follow the Chinchilla recipe used in the Mamba paper, which may be slightly different from the original recipe in~\cite{hoffmann2022training}.
    } described in the Mamba paper and reproduced in our Appendix~\ref{app:details}.
    Our Transformer baseline, based on the Llama architecture~\cite{touvron2023llama}, also follows the baseline in the Mamba paper.
    As verification, our baselines can reproduce the numbers reported in the Mamba paper in their evaluation settings.\footnote{\label{foot:tokenizer}
    The only difference between our protocol and that in the Mamba paper is the tokenizer.
    The Mamba paper uses two different tokenizers -- GPT-2 and GPT-NeoX -- for various experiments. 
    For consistency, we adhere to a single tokenizer throughout this paper and choose the Llama tokenizer~\cite{touvron2023llama}, which is the modern state-of-the-art.
    }    
    \item We do not experiment with hybrid architectures (e.g. Griffin~\cite{de2024griffin}), because our baselines are not hybrid.
    While hybrid architectures that use both self-attention and TTT layers may improve performance, they would reduce the clarity of our academic evaluation.
\end{itemize}

\subsection{Short context: the Pile}
\label{subsec:pile}

\begin{figure}
    \centering
    \includegraphics[width=\textwidth]{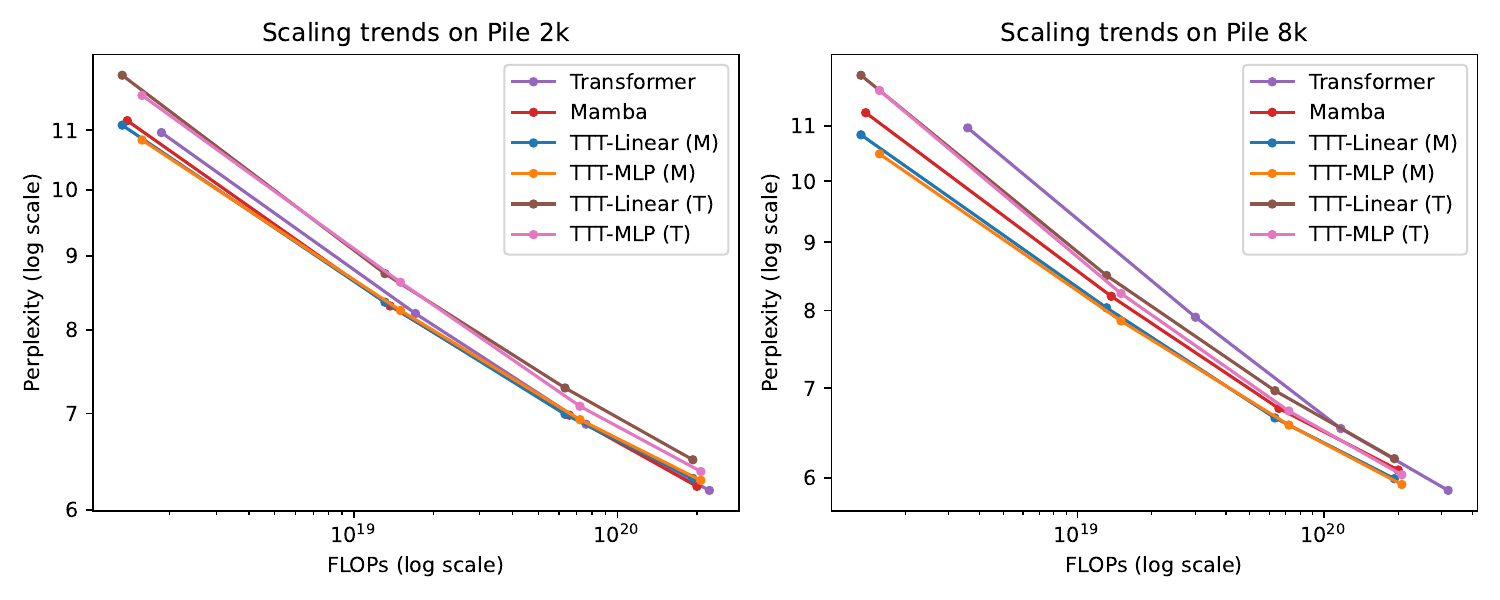}
    \caption{Evaluations for context lengths 2k and 8k on the Pile. 
    Details in Subsection~\ref{subsec:pile}.
    TTT-Linear has comparable performance as Mamba at 2k context, and better performance at 8k.
    }
    \label{fig:pile}
\end{figure}

From Figure~\ref{fig:pile}, we make a few observations:
\begin{itemize}[leftmargin=*]
    \item At 2k context, TTT-Linear~(M), Mamba, and Transformer have comparable performance, as the lines mostly overlap. 
    TTT-MLP~(M) performs slightly worse under large FLOP budgets.
    Even though TTT-MLP has better perplexity than TTT-Linear at every model size, the extra cost in FLOPs offsets the advantage.
    \item At 8k context, both TTT-Linear~(M) and TTT-MLP~(M) perform significantly better than Mamba, in contrast to the observation at 2k. Even TTT-MLP~(T) with the Transformer backbone performs slightly better than Mamba around 1.3B.
    A robust phenomenon we observe throughout this paper is that as context length grows longer, the advantage of TTT layers over Mamba widens.
    \item At 8k context, Transformer still has good (if not the best) perplexity at every model size, but its line is not competitive because of the cost in FLOPs.
\end{itemize}

\paragraph{Effect of backbone.} Switching the TTT layers from Mamba backbone into Transformer backbone has two effects. 
First, TTT layers with Mamba backbone perform better in our evaluations so far.
Second, with Mamba backbone, TTT-MLP at best is only comparable to TTT-Linear; but with Transformer backbone, TTT-MLP is clearly better.
We hypothesize that the temporal convolutions in the Mamba backbone help more when the sequence modeling layer has a less expressive hidden state. 
The linear model is less expressive than the MLP, therefore benefits more from the convolutions.
\mbox{We will revisit this hypothesis in the next subsection.}

\paragraph{Lack of linear fit.}
The Chinchilla paper empirically observed that the compute-optimal models following their recipe fall onto a line in the log-log plot of FLOPs vs. perplexity, as is often the case for scaling law experiments~\cite{hoffmann2022training}.
However, we do not observe a clean linear fit in Figure~\ref{fig:pile} or Figure~\ref{fig:books} (the analogous experiments in Books), not even for Transformers.
This is not surprising given the differences in dataset, context length, tokenizer, and architecture.
Following the Mamba paper, we connect the points instead of fitting them with linear regression due to the large error.\footnote{\label{footnote:12}
Ideally, we would have rerun all the hyper-parameters and derived a potentially new recipe for each method based on our evaluation setting, following the process in the Chinchilla paper. 
If the new compute-optimal models do fall onto a line, we could then predict performance beyond the current FLOPs regime~\cite{kaplan2020scaling, hoffmann2022training}. However, this empirical study would require orders of magnitude more resources than ours.
}

\subsection{Long context: Books}
\label{subsec:books}

\begin{figure}[t!]
    \centering
    \includegraphics[width=\textwidth]{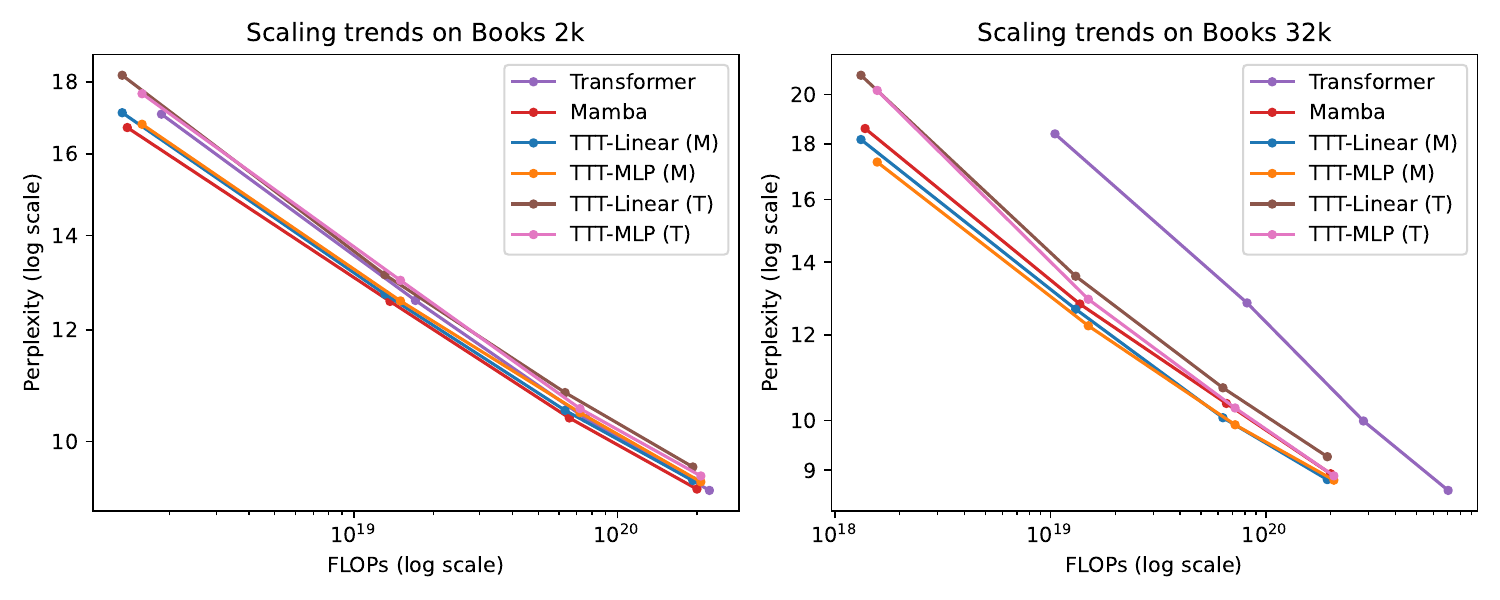}
    \caption{Evaluations for context lengths 2k and 32k on Books. 
    Details in Subsection~\ref{subsec:books}.
    Our complete results for context lengths 1k, 2k, 4k, 8k, 16k, 32k, including Transformer finetuning, are in Figure~\ref{fig:booksscaling} (in Appendix).
    Most observations from the Pile still hold.}
    \label{fig:books}
\end{figure}

To evaluate capabilities in long context, we experiment with context lengths ranging from 1k to 32k in $2\times$ increments, using a popular subset of the Pile called Books3. 
The training recipe here is the same as that for Pile.\footnote{\label{foot:bs}
Following the Mamba paper, we always use 0.5M tokens per training batch regardless of the context length. In other words, for context length $T$ we have $0.5\text{M}\,/\,T$ sequences per batch (assume divisible).
}
From the subset of results in Figure~\ref{fig:books}, we make a few observations:
\begin{itemize}[leftmargin=*]
    \item At 2k context on Books, all the observations from Pile 2k still hold, except that Mamba now performs slightly better than TTT-Linear (whereas their lines roughly overlapped for Pile 2k).
    \item At 32k context, both TTT-Linear (M) and TTT-MLP (M) perform better than Mamba, similar to the observation from Pile 8k. Even TTT-MLP (T) with the Transformer backbone performs slightly better than Mamba at 32k context.
    \item TTT-MLP (T) is only slightly worse than TTT-MLP (M) at 1.3B scale. 
    As discussed, it is hard to derive an empirical scaling law due to the lack of a clean linear fit. However, the strong trend for TTT-MLP (T) suggests that the Transformer backbone might be more suitable for larger models and longer context beyond our evaluations.
\end{itemize}
We only ablate the backbones for 2k and 32k due to the cost of training LLMs. For future work, we believe that given TTT layers with even more expressive hidden states, the Mamba backbone with temporal convolutions will become unnecessary.

\paragraph{Transformer finetuning.}
While we have been training Transformers from scratch following the Mamba paper, in practice this approach is rarely used for long context.
The standard practice is to train a Transformer in short context, then finetune in long context.
To reflect this practice, we add another baseline, \emph{TF finetune}, for context lengths 4k and above.
This baseline starts from the model trained (according to the Chinchilla recipe) on Books 2k, then uses 20\% more tokens to finetune at the designated context length, following the Llama Long paper~\cite{xiong2023effective}.
See details of the TF finetune recipe in Appendix~\ref{app:details}.

\paragraph{Experiments in Figure~\ref{fig:teaser} (right).}
Compared to TTT-Linear, TTT-MLP with matched FLOPs performs worse at short context but better at long context. 
This observation matches our expectation that the MLP as hidden state is more expressive than the linear model:
The larger capacity of a more expressive hidden state is well-utilized in long context (therefore an advantage), but redundant in short context (therefore a disadvantage in our setting with matched FLOPs).
The Transformer in this figure is TF finetune, which is the stronger baseline in 32k context.
Details of the experiments in Figure~\ref{fig:teaser} are included in Appendix~\ref{app:details}.

Our complete results for context lengths 1k, 2k, 4k, 8k, 16k, 32k, including TF finetune, are in Figure~\ref{fig:booksscaling} (in Appendix).

\subsection{Wall-clock time}
\label{subsec:time}

\begin{figure}[t!]
\vspace{-2ex}
    \centering
    \includegraphics[width=0.95\textwidth]{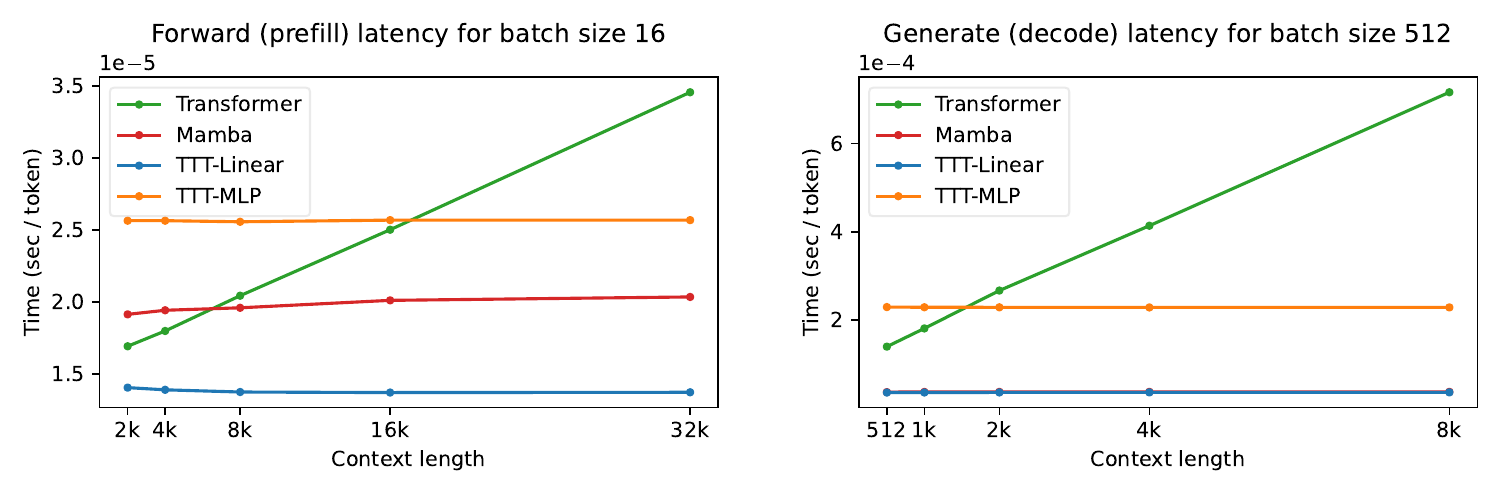}
    \vspace{-1ex}
    \caption{
    Latency on an NVIDIA A100 GPU with 80G HBM and PCIe connections.
    }
    \label{fig:latency-full}
\end{figure}

LLM training and inference can be decomposed into forward, backward, and generate.
Prompt processing during inference, also known as prefill, is the same operation as forward during training, except that the intermediate activations do not need to be stored for backward.
Since both forward (during training and inference) and backward can be parallelized, we use the dual form. 
Generating new tokens, also known as decode, is inherently sequential, so we use the primal form.

Due to resource constraints, our experiments are written in JAX and run on TPUs. 
On a v5e-256 TPU pod, the Transformer baseline takes 0.30s per iteration of training at context 2k, while TTT-Linear takes 0.27s per iteration, already 10\% faster without any systems optimization.
However, Mamba (implemented in PyTorch, Triton, and CUDA) can only run on GPUs, so for fair comparison, we also rewrite our method into GPU kernels.
We only write inference kernels for this work because the training kernel would require substantial effort and cannot be used on our TPUs.

Figure~\ref{fig:latency-full} shows the latency of our inference kernel for forward (prefill) and generate (decode). 
All models are 1.3B (1.4B for Mamba).
As expected, time per token grows linearly for Transformer as the context length increases, but stays roughly constant for the other methods.\footnote{\label{footnote:15}
We observe that the forward latency of the network increases slightly for TTT-Linear, TTT-MLP, and Mamba, even though the latency of each sequence modeling layer alone stays constant.
Consider the operation $\theta X$, where $\theta$ is $d\times d$ and $X$ is $d\times T$.
Its latency (normalized over $T$) is expected to be constant, but in practice grows slightly with $T$.
One possible cause of this phenomenon is the GPU throttling after $T$ gets very large~\cite{horace2024}.
}
Note that our Transformer baseline is significantly faster that in the Mamba paper, because we use vLLM~\cite{kwon2023efficient}, a state-of-the-art serving system, instead of the HuggingFace Transformer~\cite{wolf2019huggingface}.

\section{Related Work}
\label{sec:rel}

\subsection{Learning at Test Time}
\label{subsec:rel-ttt}

The idea of learning at test time has a long history in machine learning.
One of the earliest versions of this idea is called local learning (Bottou and Vapnik~\cite{bottou1992local}):
For each test input, train on its neighbors before making a prediction.
This procedure has been effectively applied to models ranging from SVMs~\cite{zhang2006svm} to modern LLMs~\cite{hardt2023test}.

Another early version of learning at test time is called \emph{transductive learning}~\cite{Gammerman98learningby}.
The principle of transduction, as stated by Vladimir Vapnik~\cite{vapnik2013nature}, is to
"... get the answer that you really need, but not a more general one."
Practical implementations of transductive learning use test data to add constraints to the margin of SVMs~\cite{joachims2002learning, collobert2006large}.
However, transductive learning usually needs multiple test instances to be empirically effective, unlike many instantiations of test-time training, which only need a test single instance (image, video, or natural language sequence) at a time.

In computer vision, the idea of learning at test time has been applied for decades to applications such as face detection~\cite{jain2011online}, object detection~\cite{mullapudi2018online}, image super-resolution~\cite{shocher2018zero}, and 3D reconstruction~\cite{luo2020consistent}.
More recently, the same idea has also been applied to natural language processing, where it is called dynamic evaluation~\cite{krause2018dynamic, krause2019dynamic}.
The basic approach is to directly finetune a language model on the test sequence, which often comes in the form of a prompt.

Next, we discuss two relevant lines of work in detail: test-time training and fast weights.

\subsubsection{Test-Time Training}

The core idea of \emph{Test-Time Training} (TTT) is that each test instance defines its own learning problem, where this test instance alone is the target of generalization~\cite{sun2020test}.
Concretely, for each test instance $x$, the conventional practice is to predict $f(x)$, using a predictor $f$ that is optimized for all training instances on average.
TTT first formulates a learning problem defined by $x$, then trains a model $f_x$ on $x$ (often with $f$ as initialization), and predicts $f_x(x)$.

Since the test instance comes without its label, the learning problem can only be formulated with a self-supervised task. Prior work has shown that TTT with reconstruction significantly improves performance especially on outliers~\cite{ttt-mae}. 
Improvements become even more pronounced when testing on video frames that arrive in a stream and TTT is autoregressive~\cite{wang2023test}, as $f_t$ is trained on past frames $x_1, \dots, x_t$.
The autoregressive connection makes~\cite{wang2023test} most relevant to our paper.

Conceptually, the biggest difference between our paper and prior work is that our reconstruction task is learned in an outer loop, instead of handcrafted with human priors.
Follow-up work has explored applications such as 
robot manipulation~\cite{hansen2020self} and locomotion~\cite{sun2021online},
among others, that often require different designs for the self-supervised task.
In a preliminary manuscript~\cite{sun2023learning}, we explore the idea of learning to (learn at test time) from the perspective of TTT with reconstruction, with experiments in object recognition.

\subsubsection{Fast Weights and Fast Weight Programmers}
\label{subsubsec:fw}

The general idea of \emph{fast weights} is to update the parameters of a ``fast'' model on only the most relevant data, as opposed to the conventional practice of updating a ``slow'' model on all data~\cite{tieleman2009using}.
This idea has existed since the 1980s~\cite{feldman1982dynamic, hinton1987using, von1994correlation}.
The most relevant data often includes the test instance itself, therefore TTT can be viewed as a special case of fast weights.
Compared to fast weights, TTT embraces the idea of formulating an explicit learning problem, where the test instance is the target of generalization.
Our update rule is also an explicit step of optimization.

The idea of \emph{fast weight programmers} (FWPs) is to update the fast weights at test time with a ``slow'' model that is updated less frequently, if at all~\cite{schmidhuber1992learning}.
Our inner-loop weights $W$ can be viewed as ``fast'' and outer-loop weights $\theta$ as ``slow''.
Therefore, networks containing TTT layers can be viewed as a special case of FWPs~\cite{kirsch2021meta}, similar to how TTT can be viewed as a special case of fast weights.
Notably, one instantiation in Irie et al.~\cite{irie2021going} makes the fast weights an MLP, preceding our TTT-MLP. 

Many other modern RNN layers such as linear attention~\cite{katharopoulos2020transformers, schlag2020learning} and DeltaNet~\cite{schlag2021linear, yang2024parallelizing} are also inspired by the idea of FWPs.
Given their relevance to our work, we discuss these modern RNN layers in detail in the next subsection.
For the rest of this subsection, we briefly outline a few other forms of FWPs.
Clark et al.~\cite{clark2022meta} give a Transformer a final layer of fast weights, whose initialization is trained as slow weights.
Irie et al.~\cite{irie2022modern} design the fast weights to be programmed by themselves, which can be interpreted as recursive self-improvement.
In addition,
\cite{irie2022images} builds an image generator using the images as fast weights,
\cite{irie2022neural} applies continuous-time extensions of FWPs to time-series classification,
while 
\cite{irie2023practical} and \cite{grazzi2024unlocking} demonstrate how the choice of update rules affects the expressiveness of FWPs on formal language recognition tasks.


\subsection{Modern RNN layers}

Our baseline, Mamba~\cite{gu2023mamba}, is only one of the many recent RNN layers that inherit the  linear (matrix) hidden states of linear attention~\cite{katharopoulos2020transformers, schlag2020learning}.
Some more recent examples are RWKV~\cite{peng2023rwkv, peng2024eagle}, xLSTM~\cite{beck2024xlstm}, and Gated Linear Attention (GLA)~\cite{yang2023gated}.
The most relevant work is DeltaNet~\cite{schlag2021linear}, which is equivalent to TTT-Linear with inner-loop mini-batch size 1, without the Layer Norm and residual connection.
In \cite{yang2024parallelizing}, Yang et al. further improve the performance of DeltaNet and enable parallelized updates across tokens (in our terms, across inner loop mini-batches).
Since our first version was released, RNN layers with matrix (linear) hidden states have also been further advanced in Mamba 2~\cite{dao2024transformers} and Gated DeltaNet~\cite{yang2023gated}.

Compared to this line of work, our contribution is a practical framework that can instantiate arbitrary neural networks as hidden states.
However, such instantiations can still require substantial wall-clock time, even after applying our improvements in efficiency.
For example, TTT-MLP is effective in terms of FLOPs, as shown in Figure~\ref{fig:teaser}. 
But the additional complexity of the MLP structure increases wall-clock time much more relative to FLOPs, as shown in Figure~\ref{fig:latency-full}.
It remains to be seen whether our framework can produce instantiations that either overcome this limitation or offer benefits outweighing it.

\subsection{Learning to Learn}

For decades, researchers have been arguing that learning to learn, also known as meta-learning or bi-level optimization, should be a critical component of intelligence~\cite{schmidhuber1987evolutionary, bengio1990learning, thrun1998learning, lake2017building}.
In prior work such as \cite{andrychowicz2016learning}, \cite{finn2017model} and \cite{metz2018meta}, the inner loop learns from an entire dataset at a time instead of a sequence, so the outer loop needs a collection of datasets or tasks.
In short, the outer loop  is ``one level above'' regular training.
Since it is hard to collect millions of datasets, this outer loop is hard to scale.

In contrast, for TTT, each sequence itself is a dataset and defines its own generalization problem. The inner loop is “one level below” regular training, so our outer loop is only another solution to the canonical problem of supervised learning,
instead of a new problem setting like generalization across datasets.
As illustrated in Table~\ref{tab:in-out}, our outer loop is ``at the same level'' as regular training.
This makes our outer loop easier to scale.

\renewcommand{\arraystretch}{1.6}
\begin{table}[t]
\vspace{-4ex}
\centering
\scalebox{0.95}{
\begin{tabular}{|l|l|l|l|}
\hline
& \textbf{Inner loop}
& \textbf{Outer loop}
& \textbf{Subsection} \\
\hline
\textbf{Piece of data} & Token $x_t$ 
& \cellcolor{backcolour}Sequence $x_1, \dots, x_T$ 
& \multirow{4}{*}{\ref{subsec:hidden}, \ref{subsec:layer}} \\
\cline{1-3}
\textbf{Training set} & Sequence $x_1, \dots, x_T$ 
& \cellcolor{backcolour}Dataset of sequences, e.g., Books & \\
\cline{1-3}
\textbf{Objective} & Reconstruction (loss $\ell)$ 
& \cellcolor{backcolour}Next-token prediction & \\
\cline{1-3}
\multirow{3}{*}{\textbf{Parameters}} 
& \multirow{3}{*}{$W$ (weights of $f$)} 
& \cellcolor{backcolour}$\theta_{\text{rest}}$ (rest of the network) & \\
\cline{3-4}
& & $\theta_K, \theta_Q, \theta_V$ (reconstruction views) 
& \ref{subsec:task} \\
\cline{3-4}
& & $\theta_{\text{init}}$ and $ \theta_\text{lr}$ 
& \ref{subsec:tricks} \\
\hline
\end{tabular}
}
\caption{
In summary, our paper reformulates supervised learning as learning to learn, with two nested loops.
Highlighted rows of the outer loop are the same as in the regular training.
Parameters of the outer loop become hyper-parameters of the inner loop.
Intuitively, the inner loop, \textit{i.e.} TTT, is ``one level below'' regular training.
}
\label{tab:in-out}
\end{table}

\section{Discussion}
We have reformulated the canonical problem of supervised learning as learning to (learn at test time).
Our formulation produces an alternative conceptual framework for building what is traditionally known as network architectures.
We summarize our current instantiation in Table~\ref{tab:in-out}.

\paragraph{Future work.}
The search space for effective instantiations inside this framework is huge, and our paper has only taken a baby step.
Fortunately, if our perspective holds, then heuristics from regular training can transfer to test-time training, and search can be efficient.
Next we outline some especially promising directions for future work:
\begin{itemize}[leftmargin=*]
\item \textbf{Outer-loop parameterization.}
There are many other ways to parameterize a family of multi-view reconstruction tasks, or perhaps a more general family of self-supervised tasks. It would be a big coincidence if the first one we have tried turns out to be the best.
\item \textbf{Systems optimization.} Our systems optimization in Subsection~\ref{subsec:time} has been preliminary at best, and there are many ways to improve it.
In addition, pipeline parallelism through time might allow us to process long sequences of millions of tokens on multiple devices together.
\item \textbf{Longer context and larger models.} Constrained by our academic resources, we have not trained with millions or billions in context length, which would also require larger models according to Figure~\ref{fig:ctx}.
The advantage of TTT layers should become more pronounced in longer context.
\item \textbf{More ambitious instantiations of $f$.}
When context length becomes longer, $f$ would also need to be larger. 
For video tasks and embodied agents, whose context length can easily scale up to millions or billions, $f$ could be a convolutional neural network.
\item \textbf{Multi-level learning to learn.} If $f$ itself is a self-attention layer, then by Theorem~\ref{theorem:sa} it can be interpreted as yet another inner loop nested inside the existing one. 
In this fashion, we can potentially build many levels of nested learning problems.
Versions of this idea have already been explored in \cite{irie2021going} and \cite{irie2022modern}.
\end{itemize}

\paragraph{Why do we study TTT?} First a more basic question: Why study AI? For some of us, AI is a playground to probe about the nature of human intelligence. 
Prior work often tries to model human learning with machine learning, where training is on a shuffled dataset with i.i.d. instances, and inference is on a separate test set. 
However, humans do not naturally learn with i.i.d. instances or have a train-test split.
We believe that human learning has a more promising connection with TTT, our inner loop, whose data is a potentially very long sequence with strong temporal dependencies, and any piece of data can be used for both training and testing. This is why we study TTT.

\clearpage

\section*{Author Contributions}
\label{app:authors}

\textbf{Yu Sun} started this project with Xinhao Li in November 2022, and has been working on it full-time since June 2023.
Yu proposed the conceptual framework of the project, designed mini-batch TTT and the dual form, wrote the paper with help from others, and led the daily operations of the team.

\textbf{Xinhao Li} started this project with Yu Sun in November 2022, and has been working on it full-time since then.
Xinhao and Karan co-led the development of our current codebase.
Before March 2024, Xinhao was the primary contributor to our earlier codebases that shaped this project.
Xinhao made significant contributions to the project direction in discussions.

\textbf{Karan Dalal} joined this project full-time in June 2023.
In collaboration with Xinhao, he co-led the development of our current codebase.
Karan managed the experiments in Section~\ref{sec:results}, helped write the paper, and made significant contributions to the project direction in discussions.

\textbf{Jiarui Xu} joined this project in March 2024. He led our architectural development since he joined, and made significant contributions to the project direction in discussions.

\textbf{Arjun Vikram} joined this project in September 2023. He made significant contributions to our systems optimization, as well as current and earlier codebases.

\textbf{Genghan Zhang} joined this project in January 2024. He provided critical insights and made significant improvements to our systems optimization.

\textbf{Yann Dubois} joined this project in February 2024. He proposed our current instantiation of $f$, and made significant contributions to the project direction in discussions.

\textbf{Xinlei Chen and Xiaolong Wang} have been supporting this project since November 2022, and the direction of test-time training for many years. Without their support in compute and organization, this project could not have survived its early stage.
They gave invaluable advice to our experiments.

\textbf{Sanmi Koyejo, Tatsunori Hashimoto, and Carlos Guestrin} have been supporting this project since May 2023. They gave invaluable advice to our experiments and presentation.
For example, Sanmi suggested us to focus on TTT-Linear,
Tatsu suggested the experiments in Figure~\ref{fig:teaser} (left), and Carlos outlined Section~\ref{sec:method}.

\section*{Acknowledgements}
Part of the compute for this project is generously supported by the Google TPU Research Cloud program and Hyperbolic Labs.
XW is supported, in part, by the Amazon Research Award, the Cisco Faculty Award and the Qualcomm Innovation Fellowship.
SK acknowledges support by NSF 2046795 and 2205329, NIFA award 2020-67021-32799, the Alfred P. Sloan Foundation, and Google Inc.
TH is supported by a Sony Faculty Innovation Award and a gift from Panasonic.
CG acknowledges support by the Air Force Office of Scientific Research (AFOSR), FA9550-20-1-0427, Stanford Human-Centered Artificial Intelligence (HAI) Institute, and gifts from Google and IBM.

We would like to thank Rohan Taori, Xuechen Li, Allan Zhou, Ke Chen, and Guandao Yang for many helpful discussions,
Menghao Guo for help with code release,
Xinyang Geng for help with EasyLM,
Hao Liu for help with the LWM codebase,
David Hall for help with Levanter,
Yossi Gandelsman and Yutong Bai for help at an early stage of the project,
Mert Yuksekgonul for help with figures in the paper,
Horace He, Ben Spector, and Azalia Mirhoseini for help with systems,
Sharad Vikram and Roy Frostig for answering our questions about JAX and Pallas, 
Albert Gu and Tri Dao for helping us reproduce experiments in the Mamba paper,
and Kilian Weinberger and Percy Liang for advice on presentation.
Yu Sun is grateful to his PhD advisors,
Alexei A. Efros and Moritz Hardt, for their many insights from years ago that eventually became part of this paper.

\clearpage

\bibliographystyle{plain}
\bibliography{refs}

\clearpage
\appendix

\section{Dual Form}
\label{app:dual}

Here we derive the dual form for general MLPs of arbitrary depth, with nonlinear activations.

Without loss of generality, consider $\eta = 1$ for convenience, and consider only the first mini-batch, where $t=1,\dots,b$.
Denote:
\vspace{-1ex}
$$
\hat{x}_t = \theta_Kx_t, ~~~
y_t = \theta_Vx_t, ~~~
\bar{x}_t = \theta_Qx_t.
$$
Also denote
$\hat{X} = [\hat{x}_1, \dots, \hat{x}_b]$,
and $Y$ and $\bar{X}$ analogously.
In general, uppercase letters denote matrices whose columns are vectors denoted by the corresponding lowercase letter.

For a network with $K$ layers, denote the initial parameters in layer $k$ by $W_0^k$.
Our convention is to use superscripts for the layer and subscripts for time.

\subsection{Forward pass}
During the initial forward pass of TTT,
we denote the input to layer $k$ by
$\hat{X}^k = [\hat{x}^k_1, \dots, \hat{x}^k_b]$, with $\hat{X}^1 = \hat{X}$.
Now we write the forward pass of TTT using these notations.

For $k = 1,\dots,K$:
\begin{itemize}
    \item $Z^k = W_0^k \, \hat{X}^k$
    \item $\hat{X}^{k+1} = \sigma_k\left(Z^k\right)$
\end{itemize}
where $\sigma_k$ for $k = 1,\dots,K$ can be any element-wise operation ($\R\mapsto\R$) with derivative $\sigma'$.

Given $\hat{X}^{K+1}$, we compute the loss:
\vspace{-1ex}
$$
l = \frac{1}{2}\ell\left(W_0^1,\dots,W_0^K; X\right) = \frac{1}{2}\big\|\hat{X}^{K+1} - Y\big\|_F^2 = \sum_{t=1}^b l_t,
$$
where $l_t = \frac{1}{2}\|\hat{x}^K_t - y_t\|^2$ is the same as defined in Equation~\ref{eq:multi}, except scaled by $1/2$ for convenience.

All the operations above (except $\sigma$) are \texttt{matmul}s and \texttt{sum}s, therefore are hardware efficient.
Both the primal form and the dual form share these initial operations. 

\subsection{Primal form}
The primal form first computes $G^k_t = \nabla_{W^k_0}l_t$ for $t=1,\dots,b$, then updates $W^k_t = W_0^k - \sum_{s=1}^t G^k_s$. 
Finally, given $\bar{X}^1 = [\bar{x}_1^1,\dots,\bar{x}_b^1] = \bar{X}$, the primal form repeats the forward pass with the updated $W$s.

For $k = 1,\dots,K$:
\begin{itemize}
    \item $\bar{z}_t^k = W_t^k \, \bar{x}_t^k$, for $t=1,\dots,T$
    \item $\bar{x}_t^{k+1} = \sigma_k(\bar{z}_t^k)$, for $t=1,\dots,T$
\end{itemize}
where $\bar{X}^{K+1} = [\bar{x}_1^{k+1},\dots,\bar{x}_b^{k+1}]$ contains the output tokens.

Note that a standard backward pass only computes the sum of the gradients:
$$
\nabla_{W_0^k}l  = \sum_{t=1}^b \nabla_{W^k_0}l_t = \sum_{t=1}^b G^k_t,
$$
so the computation of the individual terms in the sum $G^k_t$ for $t=1,\dots,b$ cannot be batched together into \texttt{matmul}s.
Similarly, the forward pass in primal form uses a different $W_t$ for each $\bar{x}_t$, therefore also cannot be batched in the same way as a standard forward pass.
These non-standard passes have poor hardware efficiency.

\subsection{Dual form}
As discussed in Subsection~\ref{subsec:dual}, the goal of the dual form is to compute $\bar{X}^{K+1}$ and $W_b^1,\dots,W_b^K$ with only \texttt{matmul}s and light-weight operations such as \texttt{sum}s, $\sigma$, and $\sigma'$.
To achieve this goal, we avoid explicitly computing the intermediate variables: $G^k_t$ and $W^k_t$ for $t=1,\dots,b$.

The dual form first computes 
$\nabla_{\hat{X}^{K+1}}l = \hat{X}^{K+1} - Y$, then takes a standard backward pass.

For $k = K,\dots,1$:
\begin{itemize}
    \item $\nabla_{Z^k}l = \sigma'_k\left( Z^k \right)\odot\nabla_{\hat{X}^{k+1}}l$
    \item $\nabla_{\hat{X}^k}l = \left(W_0^k\right)^T \, \nabla_{Z^k}l$
    \item $\nabla_{W_0^k}l = \nabla_{Z^k}l \, \left(\hat{X}^k\right)^T$
\end{itemize}
where $\sigma'$ is applied element-wise, and $\odot$ is element-wise multiplication.

Now we can already compute $W_b^k = W_0^k - \nabla_{W_0^k}l$.
To compute the output tokens, we do another forward pass.

For $k = 1,\dots,K$:
\begin{itemize}
    \item $\bar{Z}^k = W_0^k\bar{X}^k - \nabla_{Z^k}l\cdot \texttt{mask}\left( \left(\hat{X}^k\right)^T\bar{X}^k \right)$
    \item $\bar{X}^{k+1} = \sigma\left(\bar{Z}^k\right)$
\end{itemize}
By the end of the forward pass, we have computed $\bar{X}^{K+1}$.

While this forward pass is non-standard, it only contains \texttt{matmul}s, \texttt{sum}s, $\sigma$, and \texttt{mask}, therefore is efficient like the standard forward pass.

\subsection{Derivation}
To derive the dual form, we show that:
$$\bar{Z}^k = W_0^k\bar{X}^k - \nabla_{Z^k}l\cdot \texttt{mask}\left( \left(\hat{X}^k\right)^T\bar{X}^k \right)$$
is the same as what would be computed in the primal form.
Specifically, we show that each column $\bar{z}_t^k$ of $\bar{Z}^k$ in the second forward pass of the dual equals to $W_t^k \, \bar{x}_t^k$ in the forward pass of the primal.
We invoke a simple fact.
\begin{fact}
Define matrices $A = [a_1,\dots,a_b]$, $Q = [q_1,\dots,q_b]$, and $V = [v_1,\dots,v_b]$.\footnote{
Our matrix $A$ would usually be denoted by $K$ in another context.
We use $A$ to avoid confusion with the layer number $K$.
}
Define $\hat{v}_t = \sum_{s=1}^t a_s^Tq_t v_s$, and $\hat{V} = [\hat{v}_1,\dots,\hat{v}_b]$, then $\hat{V} = V \cdot \texttt{mask}(A^TQ)$.
\end{fact}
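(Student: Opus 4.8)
The plan is to prove the identity $\hat V = V\cdot\texttt{mask}(A^TQ)$ by comparing the two matrices column by column, using nothing more than the definition of matrix multiplication. First I would fix the convention for $\texttt{mask}$ to match the one introduced in Subsection~\ref{subsec:dual}: for any matrix $M$, $\big(\texttt{mask}(M)\big)_{st} = M_{st}$ if $s\le t$ and $0$ otherwise (the upper-triangular mask, diagonal included, with zeros rather than $-\infty$). This is the convention under which $\Delta = (W_0X - X)\,\texttt{mask}(X^TX)$ correctly reproduces $\delta_t = \sum_{s=1}^t(W_0x_s-x_s)x_s^Tx_t$, so it is the one the Fact must use.

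Next I would write out the $t$-th column of the product. By the definition of matrix multiplication, the $t$-th column of $V\cdot\texttt{mask}(A^TQ)$ equals $\sum_{s=1}^b v_s\,\big(\texttt{mask}(A^TQ)\big)_{st}$. Since $(A^TQ)_{st} = a_s^Tq_t$ and the mask annihilates every term with $s>t$, this collapses to $\sum_{s=1}^t (a_s^Tq_t)\,v_s$, which is precisely the definition of $\hat v_t$. As this holds for each $t=1,\dots,b$, the two matrices agree columnwise, hence are equal.

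There is essentially no obstacle here: the statement is a one-line consequence of expanding a single column of the product, and the entire content of the argument is bookkeeping. The only place to be careful is the orientation of $\texttt{mask}$ — upper- versus lower-triangular, and whether the diagonal is retained — which must be consistent with the convention fixed above; choosing it wrongly would shift the surviving index range to $s<t$ or $s\ge t$ and break the identity. Once that convention is pinned down the proof is immediate, and the Fact can then be applied layerwise (with $A=\hat X^k$, $Q=\bar X^k$, and $V$ the appropriate gradient-factor matrix) to justify the dual-form expression for $\bar Z^k$.
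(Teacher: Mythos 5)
Your proof is correct: the paper states this Fact without proof (treating it as immediate), and your columnwise expansion of $V\cdot\texttt{mask}(A^TQ)$ is exactly the intended argument, with the mask convention (keep entries $s\le t$, zero out $s>t$) correctly pinned down so that it matches the use of \texttt{mask} in Subsection~\ref{subsec:dual}. Nothing is missing.
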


Now plug $A = \hat{X}^k$, $Q = \bar{X}^k$, $V = \nabla_{Z^k}l$, and $\hat{V} = W^k\bar{X}^k - \bar{Z}^k$ into the fact above, we have shown the desired equality.

Note that the $\sigma_k$ and $\sigma'_k$ used above can be extended to arbitrary functions that are not necessarily element-wise operations, including normalization layers.
This extension can be achieved through, for example, \texttt{vjp} (vector-Jacobian product) in standard libraries for automatic differentiation such as JAX and PyTorch. 
However, the dual form cannot accelerate operations inside $\sigma$ or its \texttt{vjp}.
\clearpage

\section{Nadaraya-Watson estimator}
\label{app:theory}

\paragraph{Derivation for the Nadaraya-Watson estimator.}
Throughout this section, we use $\mathbf{x}$ to denote the input token $x$ as a random variable.
Our desired output is the corresponding output token, another random variable $\mathbf{z}$.
This is formulated as estimating the conditional expectation of $\mathbf{z}$:
$$
\E[\mathbf{z}|\mathbf{x} = x] = \int p(z|x)~z~dz = \int \frac{p(x, z)}{p(x)}~z~dz.
$$
Since the true probability distributions $p(x)$ and $p(x, z)$ are unknown, we replace them with their kernel density estimations.
Specifically, the kernel density estimation for $p(x)$ is:
$$
\hat{p}(x) = \frac{1}{n}\sum_{i=1}^n \kappa(x, x_i),
$$
where each $x_i$ is a piece of training data in general.
(Recall that for our paper, $x_i$ is specifically training data for the inner loop, \textit{i.e.} a token, which matches our notation in the main text.)

For estimating $p(x,y)$, we use the product kernel:
$$
\hat{p}(x, z) = \frac{1}{n}\sum_{i=1}^n \kappa(x, x_i)~\kappa'(z, z_i).
$$
At first sight, it seems absurd to factor the joint probability into two seemingly independent kernels.
But in this case, $\kappa'$ can actually be any $\kappa_i'$ dependent on $x_i$, since it will be integrated out. So the two kernels do not need to be independent.

Plugging in those estimations, we obtain the Nadaraya-Watson estimator:
\begin{align*}
\hat{\E}[\mathbf{z}|\mathbf{x} = x] &= \int \frac{\hat{p}(x, z)}{\hat{p}(x)}~z~dz \\
&= \frac{1}{\hat{p}(x)} \int \hat{p}(x, z)~z~dz \\
&= \frac{1}{\sum_{i=1}^n \kappa(x, x_i)} \int \sum_{i=1}^n \kappa(x, x_i)~\kappa'(z, z_i)~z~dz \\
&= \frac{1}{\sum_{i=1}^n \kappa(x, x_i)} \sum_{i=1}^n \kappa(x, x_i)~\int \kappa'(z, z_i)~z~dz \\
&= \frac{1}{\sum_{i=1}^n \kappa(x, x_i)} \sum_{i=1}^n \kappa(x, x_i) ~z_i.
\end{align*}

\paragraph{Asymmetric kernels.}
In modern days, people think of kernels as positive semi-definite, which might not be guaranteed for $\kappa$ unless $\theta_K=\theta_Q$.
However, people working on kernels decades ago, around the time when the Nadaraya-Watson estimator was popular, have been very lenient with the choice of kernels, and asymmetric kernels such as our $\kappa$ in Equation~\ref{eq:kernel} have enjoyed a long tradition:
When a kernel estimator uses $\theta_K\neq \theta_Q$, it is known as a balloon estimator~\cite{chen2017tutorial}.
Papers such as Breiman et al.~\cite{breiman1977variable} have even used $\theta_Q$ as a function of $x'$, known as sample-adaptive smoothing.

\clearpage

\begin{figure}
    \centering
    \includegraphics[width=\textwidth]{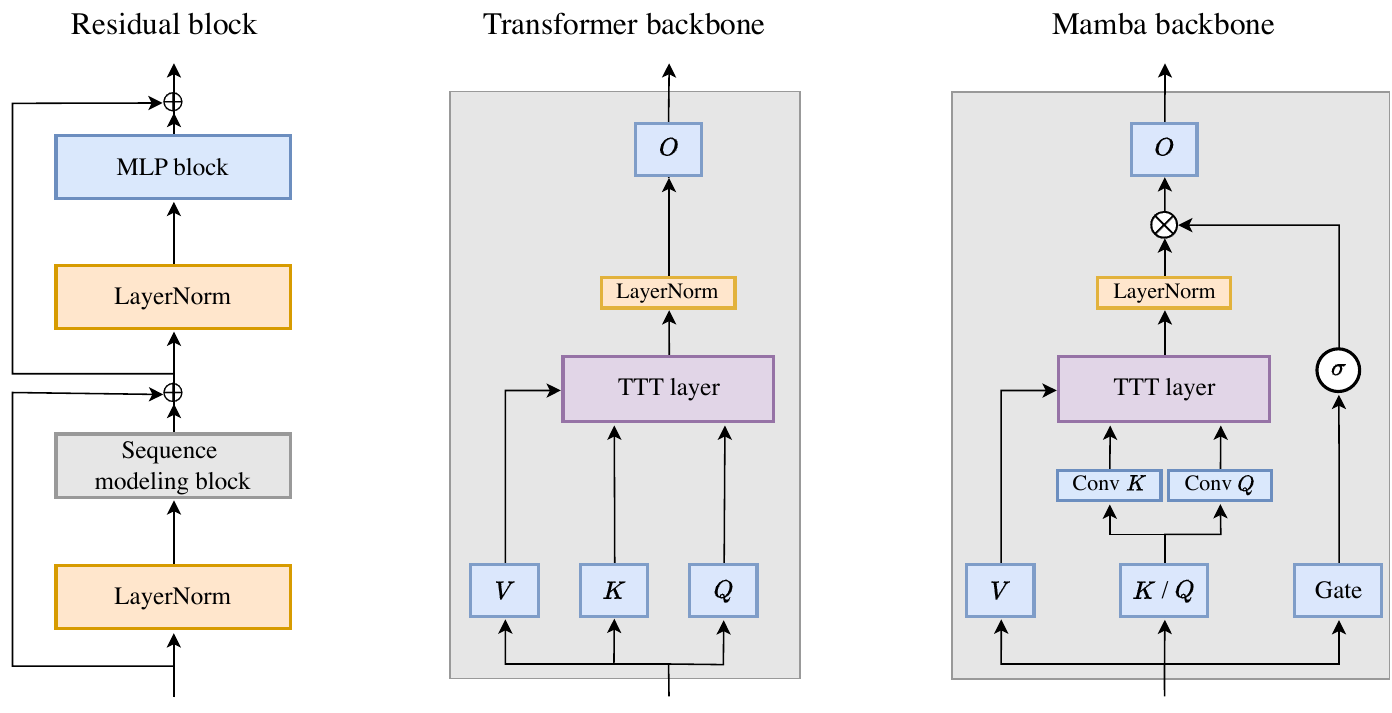}
    \caption{\textbf{Left}: A residual block, the basic building block for Transformers. The sequence modeling block is instantiated into two variants: the Transformer backbone and Mamba backbone.
    \textbf{Middle}: TTT layer in the Transformer backbone. The LN before $O$ comes from NormFormer~\cite{shleifer2021normformer}.
    \textbf{Right}: TTT layer in the backbone inspired by Mamba~\cite{gu2023mamba} and Griffin~\cite{de2024griffin}. 
    Following these two architectures, $\sigma$ here is GELU~\cite{hendrycks2016gaussian}.
    To accommodate the extra parameters of the gate without changing the embedding dimension, we simply combine $\theta_K$ and $\theta_Q$ into a single projection.
    }
    \label{fig:backbone}
    \vspace{2ex}
\end{figure}

\renewcommand{\arraystretch}{1.5}
\begin{table*}
\centering
\begin{tabular}{lccccccc}
\toprule
Params. & Blocks & Embed. dim. & Heads & Train steps & Peak LR & Tokens \\
\midrule
125M & 12 & 768 & 12 & 4800 & 3e-3 & 2.5B \\
350M & 24 & 1024 & 16 & 13500 & 1.5e-3 & 7B \\
760M & 24 & 1536 & 16 & 29000 & 1.25e-3 & 15B \\
1.3B & 24 & 2048 & 32 & 50000 & 1e-3 & 26B \\
\bottomrule
\end{tabular}
\caption{Training configurations for all experiments. This table reproduces Table 12 in the \mbox{Mamba paper.}
The only difference is that the learning rate they use for Mamba and Transformer is $5\times$ the values in their Table 12, and we report the actual values ($5\times$).
Note that this table only applies to TTT-Linear, TTT-MLP, and Transformers, as Mamba does not follow the multi-head residual block structure inherited from Transformers.
}
\label{tab:configs}
\end{table*}

\section{Experiment details}
\label{app:details}
\paragraph{Architectures.}
Our Transformer strictly follows the construction in the Mamba paper, where \emph{Transformer} is called \emph{Transformer++}.
Specifically, the Transformer architecture is based on Llama~\cite{touvron2023llama}, with rotary positional encodings (RoPE)~\cite{su2023roformer}, SwiGLU MLP blocks~\cite{shazeer2020glu}, and RMSNorm~\cite{zhang2019root} instead of LayerNorm.
Our Mamba baseline uses the public code provided by the authors.
We have verified that our baselines can reproduce the numbers reported in~\cite{gu2023mamba}.

\paragraph{Training configurations.}
Our training configurations are in Table~\ref{tab:configs}, which simply reproduces Table~12 in the Mamba paper.
As discussed in Footnote~\ref{foot:bs}, all models are trained with a batch size of 0.5M tokens regardless of context length.
All of our optimization hyper-parameters follow the ``improved recipe" in Appendix~E.2 of the Mamba paper, reproduced below:
\begin{itemize}
\item AdamW optimizer: $\beta = (0.9, 0.95)$
\item Cosine schedule: decay to end learning rate \(1e-5\)
\item Linear learning rate warmup over 10\% of the training steps
\item Weight decay: 0.1
\item Gradient clipping: 1.0
\item No Dropout
\item Mixed Precision
\end{itemize}
As discussed in Footnote~\ref{foot:tokenizer}, all models are trained using the Llama tokenizer~\cite{touvron2023llama}. 
For experiments on the Pile, this is the only difference with the recipe in the Mamba paper, which uses two other tokenizers.
For experiments on Books, we find that the original angle of the RoPE encoding~\cite{su2023roformer} $\theta=10,000$ is sub-optimal for our Transformer baseline in long context.
Starting at context length 4k, we try $\theta=500,000$ following the Llama Long paper~\cite{xiong2023effective}, and use the better perplexity for Transformer (both pretrain and finetune).

\paragraph{Transformer finetuning.}
Finetuning starts a new cosine schedule with the same optimization hyper-parameter as training from scratch, except the peak learning rate.
We try three peak learning rates for finetuning: 1e-5, 1e-4, and 1e-3, and select for the best perplexity.
We observe that 1e-4 works the best for the 125M models, while 1e-5 works the best for 350M and larger.
This observation is reasonable considering that the end learning rate for the Chinchilla recipe is 1e-5.

\paragraph{Learning rate for TTT.}
As mentioned in Subsection~\ref{subsec:tricks}, the inner-loop base learning rate $\eta_\text{base}$ is set to 1 for TTT-Linear and 0.1 for TTT-MLP.
Our heuristic for setting $\eta_\text{base}$ is similar to how people set the outer-loop learning rate for regular training: We tried $\eta_\text{base} \in \{0.01, 0.1, 1, 10\}$ and used the largest value that does not cause instabilities.
For TTT-MLP, we use linear warmup for $\eta_\text{base}$ over 10\% of the training steps, similar to regular training. 
The number of training steps in the inner loop is $T / b$ (assume divisible).
For TTT-Linear, we tried linear warmup in the inner loop but did not observe a difference.

\paragraph{Experiments in Figure~\ref{fig:teaser} (right).} To ensure fairness to Mamba, all methods in these experiments have matched training FLOPs and are trained with the same recipe (last row of Table~\ref{tab:configs}) as Mamba 1.4B.
For TTT-Linear and TTT-MLP, matched training FLOPs also imply matched inference FLOPs.
Transformer (TF finetune) has $2.8\times$ the inference FLOPs, giving it an advantage as our baseline.
To match training FLOPs with Mamba, Transformer has 19 blocks instead of 24. 
For TTT-Linear and TTT-MLP, their training FLOPs are already close to those of Mamba, so we only need to change the hidden dimension of the MLP blocks from 5504 to 5808 for TTT-Linear and 5248 for TTT-MLP.

\paragraph{Gradient checkpointing through time.} 
By default, libraries such as JAX and PyTorch save the intermediate activations during a forward pass so they can be reused during the backward pass.
However, for a TTT layer with $W$ as hidden state, this default saves $W_1,\dots,W_T$, which uses too much memory.
With TTT mini-batch and the dual form, we still need to save (assume divisible) $\kappa = T/b$ $W$s at the end of the mini-batches.
A standard technique to save memory in this scenario is gradient checkpointing~\cite{chen2016training}, which is usually applied through layers, but we apply it through time.

\clearpage
\begin{figure}
    \centering
    \includegraphics[width=\textwidth]{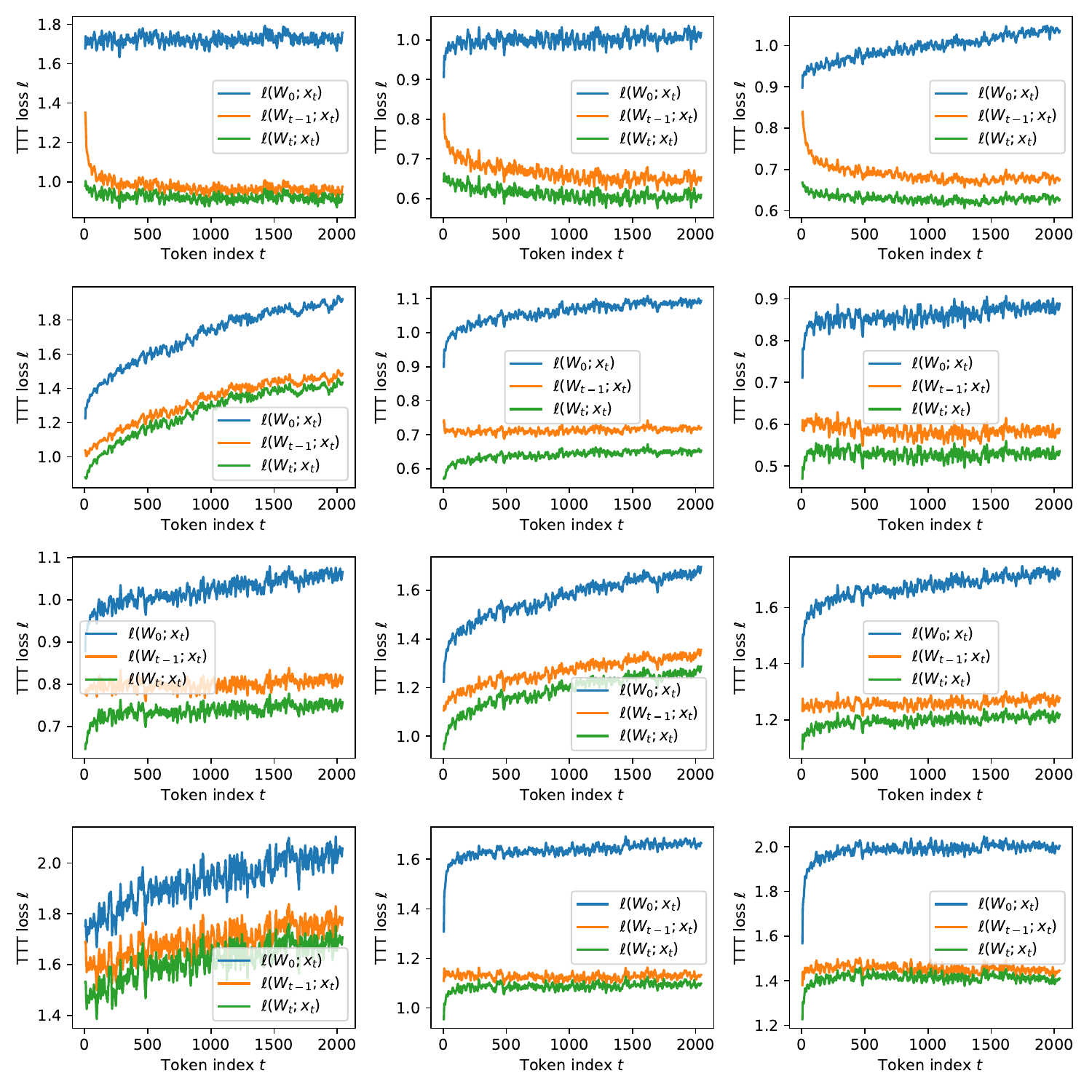}
    \caption{
    The self-supervised TTT loss $\ell$ averaged over all test sequences of the form $x_1,\dots,x_T$ where $T=2048$, for all 12 TTT layers in a network with 125M parameters train on the Pile.
    The same network is also used for $b=1$ (online GD) in the left panel of Figure~\ref{fig:batch-size}.
    For layers in the middle, we observe that $\|x_t\|$ rises steadily, causing all three losses to rise with it.
    Even for these layers, the gap between $\ell(W_0; x_t)$ and $\ell(W_t; x_t)$ still increases with $t$ .
    For visual clarity, loss values have been averaged over a sliding window of 10 timesteps.
}
    \label{fig:inner_full}
\end{figure}

\begin{figure}
    \centering
    \includegraphics[width=\textwidth]{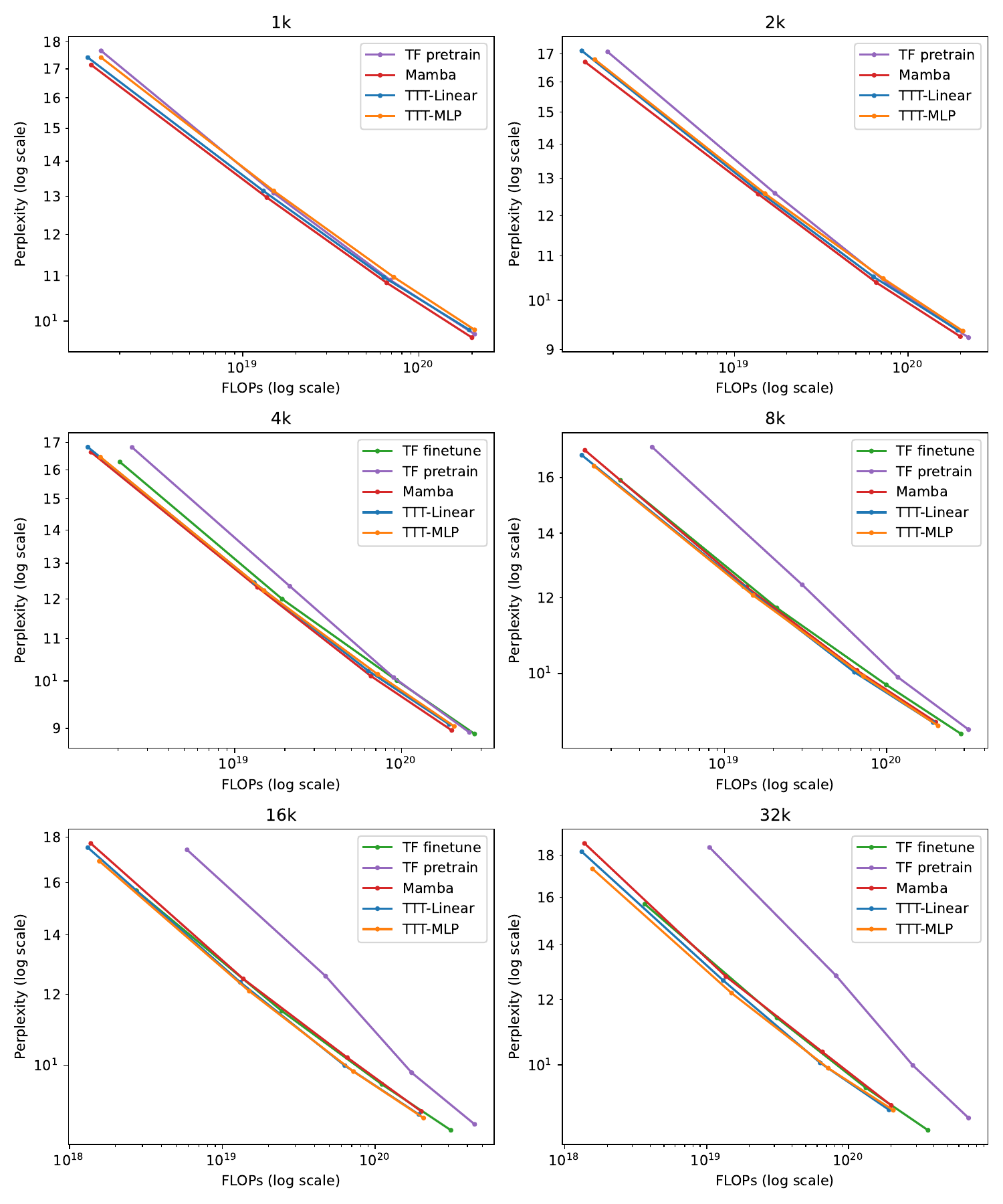}
    \caption{Complete results on Books, presented by context lengths. 
    Figure~\ref{fig:books} in Subsection~\ref{subsec:books} presents the subset of results for context lengths 2k and 32k.
}
    \label{fig:booksscaling}
\end{figure}

\begin{figure}
    \centering
    \includegraphics[width=\textwidth]{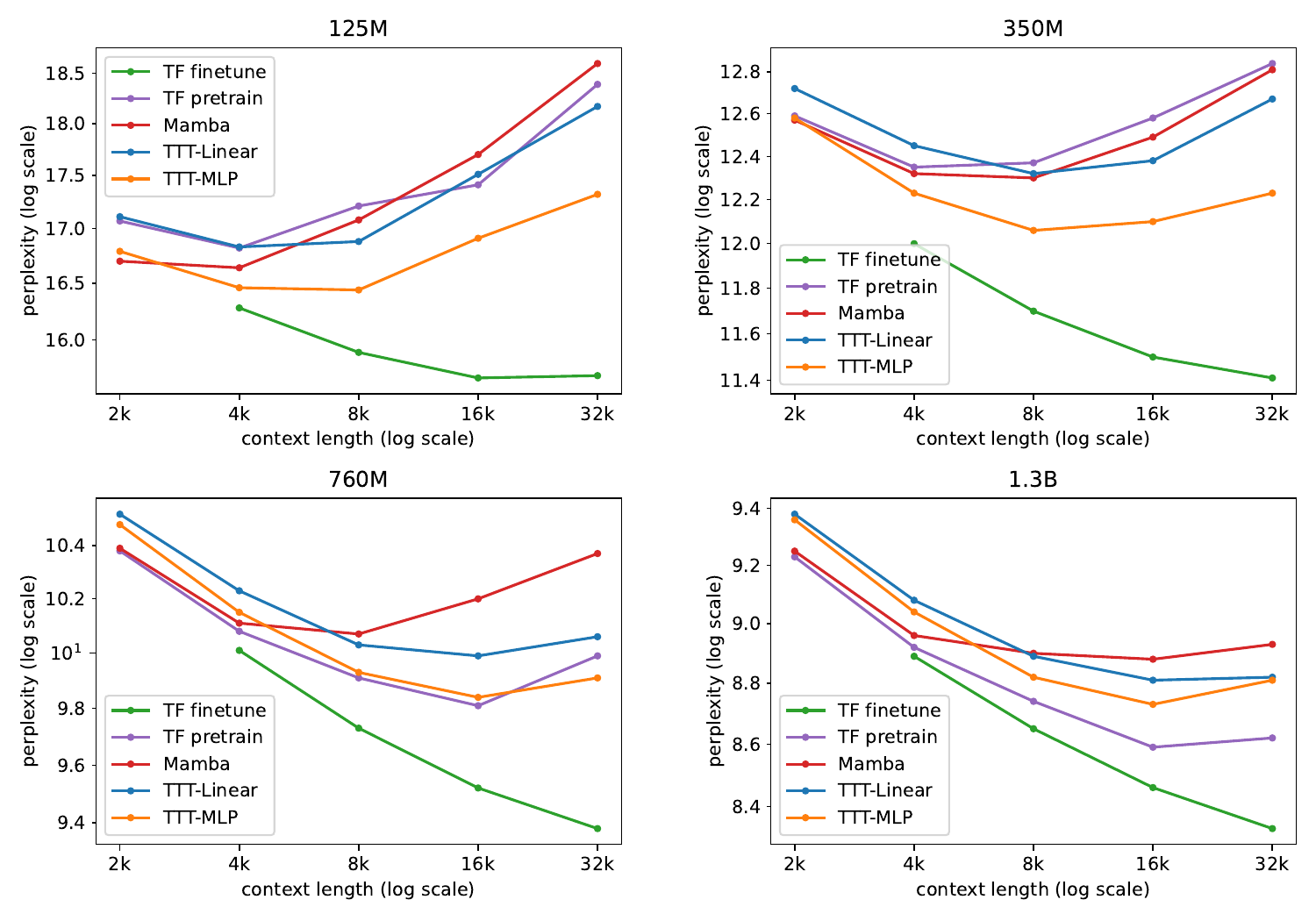}
    \caption{An alternative view of our complete results on Books, presented by model sizes, with context length as the x-axis.
    For all methods trained from scratch, perplexity becomes worse once the context length becomes too large. This trend is not observed with TF finetune, except for one case at the 125M scale.
    The best context length increases for larger models (trained from scratch).
}
    \label{fig:ctx}
\end{figure}

\end{document}